\numberwithin{equation}{section}
\DeclareFontFamily{OT1}{rsfs}{}
\DeclareFontShape{OT1}{rsfs}{n}{it}{<-> rsfs10}{}
\DeclareMathAlphabet{\mathscr}{OT1}{rsfs}{n}{it}
\theoremstyle{plain}
\newtheorem{theorem}{Theorem}[section]
\theoremstyle{definition}
\newtheorem{definition}[theorem]{Definition}
\newcommand\myconv {\circ}
\newcommand{\htimes}{*}
\newcommand{\myre}{\Re}
\newcommand{\myim}{\Im}
\newcommand\mydiag[3]{\operatorname{diag}(\underset{#1}{\underbrace{#2,\cdots,#3}}) }
\newcommand\myhtrans{\mathcal{H}}
\newcommand{\mybrace}[1]{  (#1) }
\begin{document}

\title[Generalized Visual Information Analysis Via Tensorial Algebra]{Generalized Visual Information Analysis Via Tensorial Algebra}

\author{Liang Liao and Stephen John Maybank
%Liang Liao$^{1}$ and Stephen John Maybank$^{2}$
%\vspace{0.5em}\\
%$^{1}~$Zhongyuan University of Technology, ~~$^{2}~$Birkbeck, University of London
%\vspace{0.5em}\\
%{\MakeUppercase{liaoliangis@126.com}},\textit{~~~~} 
%{\MakeUppercase{sjmaybank@dcs.bbk.ac.uk}} 
}
%\address{China}
%\email{liaoliangis@126.com}

%\email{}

%\subjclass[2010]{37P99}

\begin{textblock}{12}(5.5, 4.2)
\MakeUppercase{\footnotesize liaoliangis@126.com}, \MakeUppercase{\footnotesize sjmaybank@dcs.bbk.ac.uk}

\end{textblock}

\begin{abstract}  
\vspace{2em}
Higher order data is modeled using matrices whose entries are numerical arrays of a fixed size. These arrays, called t-scalars, form a commutative ring under the convolution product. Matrices with elements in the ring of t-scalars are referred to as t-matrices. The t-matrices can be scaled, added and multiplied in the usual way. There are t-matrix generalizations of positive matrices, orthogonal matrices and Hermitian symmetric matrices. With the t-matrix model, it is possible to generalize many well-known matrix algorithms. In particular, the t-matrices are used to generalize the SVD (Singular Value Decomposition), HOSVD (High Order SVD), PCA (Principal Component Analysis), 2DPCA (Two Dimensional PCA) and GCA (Grassmannian Component Analysis). The generalized t-matrix algorithms, namely TSVD, THOSVD, TPCA, T2DPCA and TGCA, are applied to low-rank approximation, reconstruction, and supervised classification of images. Experiments show that the t-matrix algorithms compare favorably with standard matrix algorithms.

~\\
\textsc{keywords}.~~
Commutative ring, Generalized scalars, 
{Grassmannian manifold}, Image Analysis,
Tensor singular value decomposition,
Tensors
\end{abstract}

\maketitle

%\tableofcontents

%%%%%%%%%%%%%%%%%%%%%%%%%%%%%%%%%%%%%%%%%%%%%%%%%

\section{Introduction}

In data analysis, machine learning and computer vision, the data are often given in the form of multi-dimensional arrays of numbers. For example, an RGB image has three dimensions, namely two for the pixel array and a third dimension for the values of the pixels. An RGB image is said to be an array of order three. Alternatively, the RGB image is said to have three modes or to be three-way. A video sequence of images is of order four, with two dimensions for the pixel array, one dimension for time and a fourth dimension for the pixel values.

One way of analyzing multi-dimensional data is to remove the array structure by flattening, to obtain a vector. A set of vectors obtained in this way can be analyzed using standard matrix-vector algorithms such as the singular value decomposition (SVD) and principal components analysis (PCA). An alternative to flattening is to use algorithms that preserve the multi-dimensional structure. In these algorithms, the elements of matrices and vectors are entire arrays rather than real numbers in $\mathbb{R}$ or complex numbers in $\mathbb{C}$. Multi-dimensional arrays with the same dimensions can be added in the usual way, but there is no definition of multiplication which satisfies the requirements for a field such as $\mathbb{R}$ or $\mathbb{C}$. However, multiplication based on the convolution product has many but not all of the properties of a field. Convolution multiplication differs from the multiplication in a field in that many elements have no multiplicative inverse. The multi-dimensional arrays with given dimensions form a commutative ring under the convolution product. The elements of this ring are referred to as t-scalars.

An application of the Fourier transform shows that each ring of t-scalars under the convolution product is isomorphic to a ring of arrays in which the Hadamard product defines the multiplication. In effect, the ring obtained by applying the Fourier transform splits into a product of copies of $\mathbb{C}$. It is this splitting which allows the construction of new algorithms for analyzing tensorial data without flattening. The so-called t-matrices with t-scalar entries have many of the properties of matrices with elements in $\mathbb{R}$ or $\mathbb{C}$. In particular, t-matrices can be scaled, added and multiplied. There is an additive identity and a multiplicative identity. The determinant of a t-matrix is defined and a given t-matrix is invertible if and only if it has an invertible determinant. The t-matrices include generalizations of positive matrices, orthogonal matrices and symmetric matrices.

A tensorial version, TSVD, of the SVD is described in \cite{Kilmer2011Factorization-TProduct001-0006} and \cite{liaoliang-00016}. The TSVD expresses a t-matrix as the product of three t-matrices, of which two are generalizations of the orthogonal matrices and one is a diagonal matrix with positive t-scalars on the diagonal. The TSVD is used to define tensorial versions of principal components analysis (PCA) and two dimensional PCA (2DPCA). A tensorial version of Grassmannian components analysis is also defined. These tensorial algorithms are tested by experiments that include low-rank approximations to tensors, reconstruction of tensors and terrain classification using hyperspectral images. The different algorithms are compared using the peak signal to noise ratio and Cohen's kappa.

The t-scalars are described in Section \ref{section:tensor-algebra} and the t-matrices are described in Section \ref{section:generalized-matrix}. The TSVD is described in Section \ref{section:generalized-SVD}. A tensorial version of principal components analysis (TPCA) is obtained from the TSVD in Section \ref{section:image-analysis-SVD} and then generalized to tensorial two dimensional PCA (T2DPCA). A tensorial version of Grassmannian components analysis is also defined. The tensorial algorithms are tested experimentally in Section \ref{section:experiments}. Some concluding remarks are made in section \ref{section:conclusions}.

\subsection{Related Work}
%~\\~\vspace{-2em}

A tensor of order two or more can be simplified using the so-called
$N$ mode singular value decomposition (SVD). The three mode case is described by Tucker in \cite{tucker1966some-00012}. The multi-modal case is discussed in detail by De Lathauwer et al. in \cite{de2000multilinear-0007}. Each mode of the tensor has an associated set of vectors, each one of which is obtained by varying the index for the given mode while keeping the indices of the other modes fixed. In the $N$ mode SVD, an orthonormal basis is obtained for the space spanned by these vectors. In the 2-mode case, the result is the usual SVD.  The resulting decomposition of a tensor is referred to as the higher-order SVD (HOSVD). Surveys of tensor decompositions can be found in Kolda and Bader \cite{Kolda2009TensorDecompositions} and Sidiropoulos et al. \cite{Sidiropoulos2017TensorDecomposition}. De Lathauwer et al. \cite{de2000multilinear-0007} describe a higher-order eigenvalue decomposition. Vasilescu and Terzopoulos \cite{vasilescu2002multilinear-00014} use the $N$ mode SVD to simplify a fifth-order tensor constructed from face images taken under varying conditions and with varying expressions. A tensor version of the singular value decomposition is described in \cite{Kilmer2011Factorization-TProduct001-0006}, \cite{liaoliang-00016}, and \cite{Kilmer2013Third-TProduct003-0005}.

He et al. \cite{He2016Tensor-0008} sample a hyperspectral data cube to yield tensors of order three of which two orders are for the pixel array and one order is for the hyperspectral bands. A training set of samples is used to produce a dictionary for sparse classification. Lu et al. \cite{lu2008mpca-00010} use $N$-mode analysis to obtain projections of tensors to a lower-dimensional space. The resulting multilinear PCA is applied to the classification of gait images. Vannieuwenhoven et al. \cite{vannieuwenhoven2012new-00013} describe a new method for truncating the higher-order SVD, to obtain low-rank multilinear approximations to tensors. The method is tested on the classification of handwritten  digits and the compression of a database of face images.

Many authors have studied algebras of matrices in which the elements are tensors of order one, equipped with a convolution multiplication, under which they form a commutative ring $R$ with a multiplicative identity. In particular, Gleich et al. \cite{Gleich2014The-0004} describe the generalized eigenvalues and eigenvectors of matrices with elements in $R$ and show how the standard power method for finding an eigenvector and the standard Arnoldi method for constructing an orthogonal basis for a Krylov subspace can both be generalized.  Braman \cite{Braman2010Third-TProduct002-0001} shows that the t-vectors with a given dimension form a free module over $R$. Kilmer and Martin \cite{Kilmer2011Factorization-TProduct001-0006} show that many of the properties and structures of canonical matrices and vectors can be generalized. Their examples include transposition, orthogonality and the singular value decomposition (SVD). The tensor SVD is used to compress tensors. A tensor-based method for image de-blurring is also described. Kilmer et al. \cite{Kilmer2013Third-TProduct003-0005} generalize the inner product of two vectors, suggest a notion of
the angle between two vectors with elements in $R$, and define a notion of orthogonality for two vectors. A generalization of the Gram-Schmidt method for generating an orthonormal set of vectors is also described in \cite{Kilmer2013Third-TProduct003-0005}.

Zhang et al. \cite{liaoliang-00016} use the tensor SVD to store video sequences efficiently and also to fill in missing entries in video sequences. Zhang et al. \cite{Zhang2016A-00015} use a randomized  version of the tensor SVD to produce low-rank approximations to matrices. Ren et al. \cite{Ren2017Hyperspectral-00011} define a tensor version of principal component analysis and use it to extract features from hyperspectral images. The features are classified using standard methods such as support vector machines and nearest neighbors.
Liao et al. \cite{Liao2017Supervised-0009} generalize a sparse representation classifier to tensor data and apply the generalized classifier to image data such as numerals and faces. Chen et al. \cite{Chen2015Change-0002} use a four-dimensional HOSVD to detect changes in a time sequence of hyperspectral images. The K-means clustering algorithm is used to classify the pixel values as changed or unchanged. Fan et al. \cite{Fan2018Spatial-0003} model a hyperspectral image as the sum of an ideal image, a sparse noise term and a Gaussian noise term.
A product of two low-rank tensors models the ideal image.
The low-rank tensors are estimated by minimizing  a penalty function obtained by adding the squared errors in a fit of the hyperspectral image to penalty terms for the sparse noise and the sizes of the two low-rank tensors. Lu et al. \cite{Lu2016TensorRobust} approximate a third-order tensor using the sum of a low-rank tensor and a sparse tensor. Under suitable conditions, the low-rank tensor and the sparse tensor are recovered exactly.

\section{T-scalars}
\label{section:tensor-algebra}

The notations  for t-scalars are summarized in Section \ref{section:notation}. Basic definitions are given in Section \ref{section:Definitions}. The Fourier transform of a t-scalar is defined in Section \ref{FourierTransformOfAT-Scalar}. Properties of t-scalars and the Fourier transform of a t-scalar are described in Section \ref{section:PropertiesOft-scalars}. A generalization of the t-scalars is described in Section \ref{subsection:Generalizedt-scalars}.

%\vspace{-1em}
\subsection{Notations and Preliminaries}
\label{section:notation}
%~\\~\vspace{-2em}

An array of order $N$ over the complex numbers $\mathbb{C}$ is an element of the set $C$ defined by $C\equiv\mathbb{C}^{I_{1}\times \ldots \times I_{N}}$, where the
$I_{n}$ for $1\le n \le N$ are strictly positive integers. Similarly, an array of order $N$ over the real numbers is an element of the set $R$ defined by $R\equiv\mathbb{R}^{I_{1}\times\ldots\times I_{N}}$.
The sets $R$ and $C$ have the structure of commutative rings, in which the product is defined by circular convolution. The elements of $C$ and $R$ are referred to as t-scalars.

Elements of $\mathbb{R}$ and $\mathbb{C}$ are denoted by lower case letters and
tensorial data are denoted by upper case letters. The t-scalars are identified using the subscript $T$, for example, $X_{T}$. Lower case subscripts such as $i$, $j$, $\alpha$, $\beta$ are indices or lists of indices.

All indices begin from $1$ rather than $0$. Given an array of any order $N$, namely
$X \in \mathbb{C}^{I_1\times I_2 \times \cdots \times I_N}$  ($N \geqslant 1 $),
$X_{i_1, i_2, \cdots, i_N}$
or $(X)_{i_1, i_2, \cdots, i_N}$
denote its $(i_1, i_2, \cdots,  i_N)$-th entry in $\mathbb{C}$. The notation $X_{i}$, or $(X)_{i}$, is also used, where $i$ is a multi-index defined by $i = (i_{1},\cdots, i_{N})$. Let $I=(I_{1},I_{2}, \ldots, I_{N})$ and let $i$ be a multi-index. The notation $1\le i\le I$ specifies the range of values of $i$ such that
$1\le i_{n}\le I_{n}$ for $1\le n\le N$. It is often convenient to extend the indexing beyond the range specified by $I$. Let $j$ be a general multi-index. Then $X_{j}$ is defined by $X_{j} = X_{i}$, where $i$ is the multi-index such that each component
$i_{n}$ is in the range $1\le i_{n}\le I_{n}$ and $i_{n}-j_{n}$ is divisible by $I_{n}$. A multi-index such as $i-j+1$ has components $i_{n}-j_{n}+1$ for $1\le n\le N$. The sum $\sum\nolimits_{i=1}^{I} (\cdot)$ is an abbreviation for
$
\sum\nolimits_{i_{1}=1}^{I_{1}}\ldots \sum_{i_{N}=1}^{I_{N}} (\cdot).
$

\subsection{Definitions}
\label{section:Definitions}
%~\\~\vspace{-2em}

The following definitions are for t-scalars in $C$. Similar definitions can be made for t-scalars in $R$.

\begin{definition}
\label{definition:tensor-addition}
T-scalar addition.~
Given t-scalars $X_\mathit{T}$ and $Y_\mathit{T} $ in  $C$, the addition of $X_\mathit{T}$ and $Y_\mathit{T} $
denoted by $D_\mathit{T} \doteq X_\mathit{T} + Y_\mathit{T} $
is element-wise,
\begin{equation}
D_{T,i} = X_{T,i}+Y_{T,i},~~1\le i\le I.
\end{equation}
\end{definition}

\begin{definition}
\label{definition:tensor-product}
{T-scalar multiplication}.
Given t-scalars
$X_\mathit{T}$ and $Y_\mathit{T}$ in  $C$,
their product, denoted by $D_\mathit{T}= X_\mathit{T} \myconv Y_\mathit{T} $
is a t-scalar in $C$ defined by the circular convolution
\begin{equation}
\label{equation:002}
D_{T,i} = \sum_{j=1}^{I}X_{T,i-j+1}Y_{T,j},~~1\le i\le I.
\end{equation}
\end{definition}

Definitions \ref{definition:tensor-addition} and \ref{definition:tensor-product} reduce to complex number addition and multiplication when $N = 1$ and $I_{1} = 1$.

\begin{definition}
\label{definition:tensor-zero}
{Zero t-scalar}.~ The zero t-scalar
$Z_\mathit{T}$ is the array in $C$ defined by
\begin{equation}
Z_{T,i} = 0,~~1\le i\le I.
\end{equation}
\end{definition}
For all t-scalars $X_\mathit{T}$,
$X_\mathit{T} + Z_\mathit{T} = X_\mathit{T} $ and $X_\mathit{T} \myconv Z_\mathit{T} = Z_\mathit{T}$.

\begin{definition}
\label{definition:tensor-identity}
{Identity t-scalar.}~
The identity t-scalar
$E_\mathit{T}$
in $C$ has the first entry equal to $1$ and all other entries equal to $0$, namely,
$E_{T,i} = 1$ if $i = (1, \cdots, 1)$ and $E_{T,i} = 0$ otherwise.
\end{definition}
For all t-scalars $X_\mathit{T} \in C$,  $X_\mathit{T} \myconv E_\mathit{T} \equiv X_\mathit{T}$.

The set of t-scalars satisfies the axioms of a commutative ring with $Z_{T}$ as an additive identity and $E_{T}$ as a multiplicative identity. This ring of t-scalars is denoted by $(C, +, \myconv)$.
The ring $(C, +, \myconv)$ is a generalization of the field $(\mathbb{C}, +, \cdot)$ of complex numbers. If the t-scalars are restricted to have real number elements, then the ring $(R, +, \myconv)$ is obtained.

\subsection{Fourier Transform of a T-scalar}
\label{FourierTransformOfAT-Scalar}

%~\\~\vspace{-2em}

Let $\zeta_{n}$ be a primitive $I_{n}$-th root of unity, for example,
\[
\zeta_{n} = \exp\left(2\pi\sqrt{-1}/I_{n}\right),~~1\le n\le N.
\]
Let $\overline{\zeta}_{n}$ be the complex conjugate of $\zeta_{n}$ and let $X_{T}$ be
a t-scalar in the ring $C$. The Fourier transform $F(X_{T})$ of $X_{T}$ is defined by
\[
F(X_{T})_{i} = \sum\limits_{j=1}^{I}X_{T,j}\cdot \zeta_{1}^{(i_{1}-1)(j_{1}-1)}\cdots \zeta_{N}^{(i_{N}-1)(j_{N}-1)}
\]
for all indices $1\le i\le I$.

The inverse of the Fourier transform is defined by
\[
\begin{aligned}
X_{T, i} = \frac{\sum\nolimits_{j=1}^{I}F(X_{T})_{j}\cdot \overline{\zeta}_{1}^{(i_{1}-1)(j_{1}-1)}\cdots
\overline{\zeta}_{N}^{(i_{N}-1)(j_{N}-1)}}{I_{1}\cdots I_{N}}
\end{aligned}
\]
for all indices $1\le i\le I$.

{
Given t-scalars $X_T \in C$ and $Y_{T} \in C$ and their t-scalar product $D_{T} = X_T \myconv Y_T $, it follows that
\begin{equation}
\label{equation:006}
F(D_\mathit{T}) = F(X_\mathit{T}) \htimes F(Y_\mathit{T}),
\end{equation}
where $\htimes$ denotes the Hadamard product in $C$. Equation (\ref{equation:006}) is an extension of the convolution theorem \cite{bracewell1986fourier}. The equation can be equivalently rewritten as
\begin{equation}
\label{equation:Hadamard-product}
F(D_\mathit{T})_{i} = F(X_\mathit{T})_{i} \cdot F(Y_\mathit{T})_{i},~~1\le i\le I,
\end{equation}
where $\cdot$ is multiplication in $\mathbb{C}$.

An equivalent definition of the Fourier transform of a higher-order array in the form of multi-mode tensor multiplication and a diagram of the multiplication of two t-scalars, computed in the Fourier domain, is given in a supplementary file.
}

It is not difficult to prove that $C$ is a commutative ring, $(C,+, \htimes)$, under the Hadamard product. The Fourier transform is a ring isomorphism from $(C, +, \myconv)$ to $(C, +, \htimes)$. The identity element of $(C, +, \htimes)$ is $J_{T} = F(E_{T})$. All the entries of $J_{T}$ are equal to 1.

\subsection{Properties of t-scalars}
\label{section:PropertiesOft-scalars}
%~\\~\vspace{-2em}

The invertible t-scalars are defined as follows.

\begin{definition}
\label{definition-invertiable-t-scalar}
{Invertible t-scalar}:
Given a t-scalar $X_\mathit{T}$, if there exists a t-scalar $Y_\mathit{T}$ satisfying $X_\mathit{T} \myconv Y_\mathit{T} = E_\mathit{T}$, then $X_\mathit{T}$ is said to be invertible. The t-scalar $Y_\mathit{T}$ is the inverse
of $X_\mathit{T}$ and denoted by $Y_\mathit{T} \doteq X_\mathit{T}^{-1}
\doteq E_\mathit{T} / X_\mathit{T}\;.
$
\end{definition}

The zero t-scalar $Z_\mathit{T}$ is non-invertible. In addition, there is
an infinite number of t-scalars that are non-invertible. For example, given a t-scalar $X_{T} \in C$, if the entries of $X_{T}$ are all equal, then $X_{T}$ is non-invertible. The existence of more than one non-invertible element shows that $C$ is not a field.

\begin{definition}
\label{scalar-multiplication}
{Scalar multiplication of a t-scalar}.~
Given a scalar $\lambda \in \mathbb{C}$ and
a t-scalar
$X_\mathit{T} \in C$, their product,
denoted by
$Y_\mathit{T} = \lambda \cdot X_\mathit{T} \equiv
X_\mathit{T} \cdot \lambda$
is the t-scalar given by
\begin{equation}
Y_\mathit{T,i} = \lambda \cdot X_\mathit{T,i},~~1\le i\le I.
\end{equation}
\end{definition}
It can be shown that the set of t-scalars is a vector space over $\mathbb{C}$.

The following definition of the conjugate of a t-scalar generalizes the conjugate of a complex number.
\begin{definition}
\label{definition-transition-of-a-tensor}
Conjugate of a t-scalar.~Given a t-scalar $X_\mathit{T}$ in $C$,
its conjugate, denoted by $\operatorname{conj}(X_\mathit{T}) $,
is the t-scalar in $C$ such that
\begin{equation}
\operatorname{conj}(X_\mathit{T})_{i} = \overline{X_{T, 2-i}}~,~~1\le i\le I,
\label{equation:entry-of-conjugate-t-scalar}
\end{equation}
where $\overline{X_{T, 2-i}}$ is the complex conjugate of $X_{T, 2-i}$ in $\mathbb{C}$.
\end{definition}

The conjugate of a t-scalar reduces to the conjugate of a complex number when $N = 1$, $I_{1} = 1$.
The relationship of $\operatorname{conj}(X_\mathit{T})$ and
$X_\mathit{T}$ is much clearer if they are mapped to the Fourier domain -- each entry of $F(\operatorname{conj}(X_\mathit{T}))$
is the complex conjugate of the corresponding entry of
$F(X_\mathit{T})$, namely
\begin{equation}
\label{equation:009}
F(\operatorname{conj}(X_\mathit{T}))_{i} =
\overline{F(X_\mathit{T})_{i}}~,~~1\le i\le I.
\end{equation}
It follows from equation (\ref{equation:entry-of-conjugate-t-scalar}) that $\operatorname{conj}(\operatorname{conj}(X_\mathit{T}) ) = X_\mathit{T} $ for any $X_\mathit{T} \in C$.

\begin{definition}
\label{definition-real-t-scalar}
Self-conjugate t-scalar:
Given a t-scalar $X_\mathit{T} \in C$, if $X_\mathit{T} = \operatorname{conj}(X_\mathit{T}) $, then $X_\mathit{T}$ is said to be a self-conjugate t-scalar.
\end{definition}

If $X_{T}$ is self conjugate, then
\begin{equation}
\overline{F(X_{T})_{i}} = F(\operatorname{conj}(X_{T}))_{i} = F(X_{T})_{i} \in \mathbb{C}~,1 \le i \le I.
\label{Fourierconj}
\end{equation}
It follows
from equation (\ref{Fourierconj})
that $X_{T}$ is self-conjugate if and only if all the elements of $F(X_{T})$ are real numbers.

The t-scalars $Z_\mathit{T}$
and $E_\mathit{T}$ are both self-conjugate. Furthermore,
the self-conjugate t-scalars form a ring denoted by $C^\mathit{sc}$. This ring is a subring of $C$.

Given any t-scalar ${X}_\mathit{T} \in C$, let $\myre({X}_\mathit{T})$ and $\myim({X}_\mathit{T})$ be defined by
\begin{eqnarray}
\myre(X_{\mathit{T}}) &=& 2^{-1}(X_{\mathit{T}}+\operatorname{conj}(X_{\mathit{T}})), \label{A1}\\
\myim(X_{\mathit{T}})&=& \left(2\sqrt{-1}\right)^{-1}(X_{\mathit{T}}-\operatorname{conj}(X_{\mathit{T}})). \label{A2}
\end{eqnarray}
It follows from equation (\ref{Fourierconj}) that $\myre(X_{\mathit{T}})$ and $\myim(X_{\mathit{T}})$ are self-conjugate. The t-scalars
$X_\mathit{T} \in C$ and $\operatorname{conj}(X_\mathit{T}) \in C$ can be expressed in the form
\begin{eqnarray}
X_\mathit{T} &=&  \myre(X_{\mathit{T}}) + \sqrt{-1}\myim(X_{\mathit{T}}),\label{A3}\\
\operatorname{conj}(X_\mathit{T}) &=&  \myre(X_{\mathit{T}}) - \sqrt{-1}\myim(X_{\mathit{T}}).\label{A4}
\end{eqnarray}
In an analogy with the real and imaginary parts of a complex number, $\myre(X_\mathit{T})$ is called
the real part of $X_{T}$ and $\myim(X_{\mathit{T}})$ is called the imaginary part of $X_{T}$.

%---------
Given two t-scalars $X_\mathit{T}$ and $Y_\mathit{T}$, the equations (\ref{equation:backward-compatible}) hold true and are backward compatible with the corresponding equations for complex numbers.
\begin{figure*}[htb]
\begin{equation}
\begin{matrix}
X_T + Y_T \equiv
\Big(\myre{(X_\mathit{T})} + \myre{(Y_\mathit{T})} \Big) +
\sqrt{-1}\cdot \Big(\myim{(X_\mathit{T})} + \myim{(Y_\mathit{T})} \Big)  \\
X_\mathit{T} \myconv Y_\mathit{T} \equiv
\Big(
\myre{(X_\mathit{T})} \myconv \myre{(Y_\mathit{T}) } -
\myim{(X_\mathit{T})} \myconv \myim{(Y_\mathit{T}) }
\Big)
+
\sqrt{-1} \Big(
\myim{(X_\mathit{T})} \myconv \myre{(Y_\mathit{T}) } +
\myre{(X_\mathit{T})} \myconv \myim{(Y_\mathit{T}) }
\Big) \\
\operatorname{conj}(X_\mathit{T})  \myconv X_\mathit{T}
\equiv
X_\mathit{T} \myconv \operatorname{conj}(X_\mathit{T}) \equiv
\myre(X_\mathit{T})^{2} +
\myim(X_\mathit{T})^{2}
\end{matrix}
\label{equation:backward-compatible}
\end{equation}
\end{figure*}

\begin{definition}
\label{definition-nonnegative-t-scalar}
{Nonnegative t-scalar}:
The t-scalar $X_\mathit{T}$ is said to be nonnegative if there exists
a self-conjugate t-scalar $Y_\mathit{T}$ such that $X_\mathit{T} = Y_\mathit{T} \myconv Y_\mathit{T} \doteq Y_\mathit{T}^{2}$.
\end{definition}

If a t-scalar $X_{T}$ is nonnegative, it is also self-conjugate, because the multiplication of any two self-conjugate t-scalars is also a self-conjugate t-scalar. Thus, both $Z_\mathit{T}$ and $E_\mathit{T}$ are nonnegative, since $Z_{T}$ and $E_{T}$ are self-conjugate t-scalars and satisfy
$Z_\mathit{T} = Z_\mathit{T}^{2}$ and $E_\mathit{T} = E_\mathit{T}^{2}$. Furthermore, for all $X_{T} \in C$, the ring element
$\myre(X_{T})^{2} + \myim(X_{T})^{2}$ is nonnegative.

The set $S^\mathit{nonneg}$ of nonnegative t-scalars is closed under the t-scalar addition and multiplication. Since a nonnegative t-scalar is also a self-conjugate t-scalar,
$S^\mathit{nonneg} \subset C^\mathit{sc} \subset C$.

\begin{theorem}
For all t-scalars $X_\mathit{T} \in S^\mathit{nonneg}$, there exists a unique
t-scalar $S_\mathit{T} \in S^\mathit{nonneg}$ satisfying $X_\mathit{T} = S_\mathit{T} \myconv S_\mathit{T} \doteq S_\mathit{T}^{2}$.
We call the nonnegative t-scalar $S_\mathit{T}$ the arithmetic square root of the nonnegative t-scalar $X_\mathit{T}$ and denote it by
\begin{equation}
S_\mathit{T} \doteq \sqrt{X_\mathit{T}} \doteq  X_\mathit{T}^{{1}/{2}}
\end{equation}
\end{theorem}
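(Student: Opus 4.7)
The plan is to pass to the Fourier domain, where the convolution ring $(C,+,\myconv)$ becomes the Hadamard ring $(C,+,\htimes)$ and both existence and uniqueness reduce to taking entry-wise nonnegative square roots in $\mathbb{R}$.

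First I would establish a spectral characterization of nonnegativity: a t-scalar $X_T$ is nonnegative if and only if every entry of $F(X_T)$ is a nonnegative real number. The ``only if'' direction is immediate, since $X_T = Y_T \myconv Y_T$ with $Y_T$ self-conjugate gives, via (\ref{equation:Hadamard-product}) and (\ref{Fourierconj}), $F(X_T)_i = F(Y_T)_i \cdot F(Y_T)_i = (F(Y_T)_i)^2$ with $F(Y_T)_i\in\mathbb{R}$, hence $F(X_T)_i \ge 0$. For the ``if'' direction, given such $X_T$, define $W$ entry-wise by $W_i = \sqrt{F(X_T)_i} \in \mathbb{R}_{\ge 0}$ and set $Y_T = F^{-1}(W)$; then $W$ is real, so by (\ref{Fourierconj}) $Y_T$ is self-conjugate, and $F(Y_T\myconv Y_T)_i = W_i^2 = F(X_T)_i$, whence $Y_T \myconv Y_T = X_T$, so $X_T$ is nonnegative.

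With that characterization in hand, the theorem becomes transparent. For existence, let $X_T \in S^{\operatorname{nonneg}}$ and define $S_T = F^{-1}(W)$ where $W_i = \sqrt{F(X_T)_i}$; the previous paragraph shows both that $S_T$ is self-conjugate and that $S_T \myconv S_T = X_T$, and since each $W_i \ge 0$ we actually have $W_i = (\sqrt[4]{F(X_T)_i})^2$, so $S_T$ is itself nonnegative (its Fourier transform has nonnegative real entries). For uniqueness, suppose $S_T' \in S^{\operatorname{nonneg}}$ also satisfies $(S_T')^2 = X_T$. Applying $F$ gives $F(S_T')_i^2 = F(X_T)_i$ for each $i$; but $F(S_T')_i$ is a nonnegative real number by the characterization, and the nonnegative square root in $\mathbb{R}_{\ge 0}$ is unique, so $F(S_T')_i = W_i = F(S_T)_i$ for every $i$. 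Since $F$ is a ring isomorphism (in particular, injective), $S_T' = S_T$.

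The one step that deserves care, and that I would write out explicitly, is the spectral characterization itself: in particular, checking that $F^{-1}$ of a real-valued array is self-conjugate, which uses equation (\ref{Fourierconj}) combined with the injectivity of $F$. Once this is cleanly stated, the rest is just the observation that taking a nonnegative square root entry-wise in $\mathbb{R}$ is an existence-and-uniqueness statement, transported back through the isomorphism $F$. I do not foresee any genuine obstacle beyond being precise about the equivalence ``nonnegative t-scalar $\Leftrightarrow$ entry-wise nonnegative in the Fourier domain,'' which is really the heart of why the arithmetic square root behaves as in the classical scalar case.
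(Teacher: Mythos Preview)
Your proposal is correct and follows the same Fourier-domain strategy as the paper: pass to the Hadamard ring via $F$, take entry-wise nonnegative square roots, and pull back. In fact your write-up is more complete than the paper's own proof, which establishes existence but does not explicitly verify that the constructed $S_T$ lies in $S^{\mathit{nonneg}}$ (only that it is self-conjugate) and does not address uniqueness at all; your spectral characterization of nonnegativity handles both of these cleanly.
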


\begin{proof}
Let $X_{\mathit{T}}=Y_{\mathit{T}}\myconv Y_{\mathit{T}}$, such that $Y_{\mathit{T}}$ is self-conjugate. On applying the Fourier transform, it follows that
\[
F(X_{\mathit{T}})_{i} = F(Y_{T})_{i}^{2} \ge 0,~~1\le i\le I.
\]
Let $S_{T}$ be defined such that
\[
F(S_{\mathit{T}})_{i} = (F(X_{\mathit{T}})_{i})^{1/2},~~1\le i\le I,
\]
where the nonnegative square root is chosen for each value of $i$. The Fourier components $F(S_{\mathit{T}})_{i}$ are real valued, thus $S_{\mathit{T}}$ is self conjugate. The equation $X_{\mathit{T}} = S_{\mathit{T}} \myconv S_{\mathit{T}}$ holds because the Fourier transform is injective.
\end{proof}

\begin{definition}
\label{definition:positve-t-scalar}
A nonnegative t-scalar that is invertible under  multiplication is called a positive t-scalar. The set of positive t-scalars is denoted by $S^\mathit{pos}$.
\end{definition}

The following inclusions are strict, $S^\mathit{pos} \subset S^\mathit{nonneg} \subset C^\mathit{sc} \subset C$. The inverse and the arithmetic square root of a positive t-scalar are positive.

The absolute t-value $r(X_{\mathit{T}})$ of $X_{\mathit{T}}$ is defined by
\begin{equation}
r(X_{\mathit{T}}) = \sqrt{\myre(X_{\mathit{T}})^{2}+\myim(X_{\mathit{T}})^{2}}.
\label{A5}
\end{equation}
The t-scalars $\myre(X_{\mathit{T}})$ and $\myim(X_{\mathit{T}})$ are both self-conjugate, therefore $\myre(X_{\mathit{T}})^{2}$ and $\myim(X_{\mathit{T}})^{2}$ are both {nonnegative} . The sum $\myre(X_{\mathit{T}})^{2}+\myim(X_{\mathit{T}})^{2}$ is {nonnegative} and it has a
{nonnegative} arithmetical square root, namely $r(X_T)$.

If $r(X_{\mathit{T}})$ is invertible, then let $\phi(X_{T})$ be defined by
\begin{equation}
\phi(X_{\mathit{T}}) \doteq r(X_{\mathit{T}})^{-1}\myconv X_T.
\label{A6}
\end{equation}
The ring element $\phi(X_{\mathit{T}})$ is a generalized angle. The order $1$ version of $\phi(X_{\mathit{T}})$ is obtained by Gleich et al. in \cite{Gleich2014The-0004}. Equation (\ref{A6}) generalizes the polar form of a complex number. It can be shown that
\[
\phi(X_{\mathit{T}})\myconv\operatorname{conj}(\phi(X_T)) = E_{T}.
\]
The absolute t-value $r(X_{\mathit{T}})$ is used in Section \ref{section:generalized-matrix} to define a generalization of the Frobenius norm for t-matrices.

\section{Matrices with T-Scalar Elements}
\label{section:generalized-matrix}

It is shown that t-matrices, i.e. matrices with elements in the rings $C$ or $R$,
are in many ways analogous to matrices with elements in $\mathbb{C}$ or $\mathbb{R}$.

\subsection{Indexing}
\label{section:ConceptsAndIndexingNotations}
%~\\~\vspace{-2em}

The t-matrices are
order-two arrays of t-scalars. Since the t-scalars are arrays of complex numbers, it is convenient to organize t-matrices as hierarchical arrays of complex numbers.

Let $X_\mathit{TM}$ be a t-matrix with $D_1$ rows and $D_2$ columns.
Then $X_\mathit{TM}$ is an element of $C^{D_1\times D_2}$. The $(\alpha, \beta)$ entry of $X_\mathit{TM}$ is the element of $C$ denoted by $X_{\mathit{TM},\alpha,\beta}$ for $1\le\alpha\le D_1$ and
$1\le\beta\le D_2$. Let $i$ be a multi-index for elements of $C$. Then
$X_{\mathit{TM}, i, \alpha,\beta}$ is the element of $\mathbb{C}$ given
as the $i$-th entry of the ring element $X_{\mathit{TM},\alpha,\beta}$.

The t-matrix $X_\mathit{TM}$ can be interpreted as an element in $\mathbb{C}^{I_1\times \cdots \times I_N\times D_1\times D_2}$ or alternatively it can be interpreted as an element in $\mathbb{C}^{D_1\times D_2\times I_1\times \cdots \times I_N}$. The only thing needed to switch from one data structure to the other is a permutation of indices. The data structure $\mathbb{C}^{I_1\times \cdots \times I_N\times D_1\times D_2}$ is chosen unless otherwise indicated.

\subsection{Properties of t-matrices}

(1) {\bf T-matrix addition}:
Given any t-matrices
${A}_\mathit{TM} \in C^{D_1 \times D_2}$
and
${B}_\mathit{TM} \in C^{D_1\times D_2}$, the addition,
denoted by
${C}_\mathit{TM} \doteq {A}_\mathit{TM} + {B}_\mathit{TM} \in C^{D_1\times D_2}$, is entry-wise, such that
$
C_\mathit{TM,\alpha,\beta} = A_\mathit{TM,\alpha,\beta} + B_\mathit{TM,\alpha,\beta}$, for
$1\le \alpha\le D_1$ and $1\le\beta\le D_2$.

(2) {\bf T-matrix multiplication}:
Given any t-matrices ${A}_\mathit{TM} \in C^{D_1\times Q}$  and ${B}_\mathit{TM} \in C^{Q\times D_2}$,
their product, denoted by  ${C}_\mathit{TM} \doteq {A}_\mathit{TM} \myconv {B}_\mathit{TM} $, is the t-matrix in
$C^{D_1\times D_2}$ defined by
\[
\begin{matrix}
C_\mathit{TM,\alpha,\beta}
= \sum\nolimits_{\gamma=1}^{Q}
A_\mathit{TM,\alpha,\gamma} \myconv B_\mathit{TM,\gamma, \beta}
\end{matrix}
\]
for all indices
$1\le\alpha\le D_1, 1\le \beta\le D_2$.

An example of t-matrix multiplication {$C_\mathit{TM} = A_\mathit{TM} \circ B_\mathit{TM}$ {$\in$} } $C^{2\times 1} \equiv \mathbb{C}^{3\times 3\times 2\times 1}$ where $A_\mathit{TM} $$\in$ $C^{2\times 2} \equiv
\mathbb{C}^{3\times 3\times 2\times 2}$ and
$B_\mathit{TM} $$\in$ $C^{2\times 1} \equiv
\mathbb{C}^{3\times 3\times 2\times 1}$ is given in
{a supplementary file}.

(3) {\bf Identity t-matrix} : The identity t-matrix is the diagonal t-matrix, in which each diagonal entry is equal to the identity t-scalar $E_\mathit{T}$ in Definition
\ref{definition:tensor-identity}. The $D\times D$ identity t-matrix is denoted by
$I_\mathit{TM}^{(D)} \doteq \mydiag{D}{E_{T}}{E_{T}} $.

Given any $X_\mathit{TM} \in C^{D_1\times D_2}$, it follows that $I_\mathit{TM}^{(D_1)} \myconv X_\mathit{TM} =  X_\mathit{TM} \myconv I_\mathit{TM}^{(D_2)}  = X_\mathit{TM}$. The identity t-matrix $I_\mathit{TM}^{(D)}$ is also denoted by $I_\mathit{TM}$ if the value of $D$ can be inferred from context.

(4) {\bf Scalar multiplication}:
Given any ${A}_\mathit{TM} \in C^{D_1\times D_2}$  and $\lambda \in \mathbb{C}$,
their multiplication, denoted by  ${B}_\mathit{TM} \doteq
\lambda \cdot
{A}_\mathit{TM} $, is the t-matrix in $C^{D_1\times D_2}$ defined by
\[
B_{\mathit{TM},\alpha,\beta} = \lambda \cdot
A_{\mathit{TM},\alpha,\beta},~~1\le \alpha\le D_1, 1\le \beta\le D_2.
\]
where the products with $\lambda$ are computed as in Definition \ref{scalar-multiplication}.

(5) {\bf T-scalar multiplication}:
Given any ${A}_\mathit{TM} \in C^{D_1\times D_2}$  and $\lambda_{T} \in C$,
their product, denoted by  ${B}_\mathit{TM} \doteq
\lambda_{T} \myconv {A}_\mathit{TM} $, is the t-matrix in $C^{D_1\times D_2}$ defined by
\[
B_{\mathit{TM},\alpha,\beta} = \lambda_{T} \myconv
A_{\mathit{TM},\alpha,\beta},~~1\le\alpha\le D_1, 1\le \beta\le D_2.
\]

(6) {\bf Conjugate transpose of a t-matrix} : {Given any} t-matrix $X_\mathit{TM} \in C^{D_1\times D_2}$, its conjugate transpose, denoted by $X_\mathit{TM}^{\myhtrans}$ is the t-matrix in $C^{D_2\times D_1}$ given by
\[
X_{\mathit{TM},\beta,\alpha}^{\myhtrans} = \operatorname{conj}(
X_{\mathit{TM},\alpha,\beta})\in C,~~1\le \alpha\le D_1, 1\le \beta\le D_2.
\]
A square matrix $U_\mathit{TM}$ is said to orthogonal if $U_\mathit{TM}^{\myhtrans}$ is the inverse t-matrix of
$U_\mathit{TM}$, i.e., $U_\mathit{TM}^{\myhtrans} \circ U_\mathit{TM} = U_\mathit{TM} \circ U_\mathit{TM}^{\myhtrans} = I_\mathit{TM}$.
The Fourier transform $F$ is extended to t-matrices element-wise, i.e. {$F(X_\mathit{TM})$} is the $D_1\times D_2$ t-matrix defined by
\begin{equation}
\label{equation:fourier-transform-t-matrix}
F(X_\mathit{TM})_{\alpha, \beta} = F(X_{\mathit{TM}, \alpha, \beta})\;\;.
\end{equation}
for all indices
$1 \le \alpha \le D_1$ and
$1 \le \beta \le D_2$.

It is not difficult to prove that
\[
F(X_\mathit{TM}^{\myhtrans})_{i,\beta,\alpha} =
\overline{F(X_\mathit{TM})_{i, \alpha,\beta}} \in \mathbb{C}~
\]
for all indices
$1\le i\le I, 1\le \alpha\le D_1, 1\le \beta\le D_2$.

(7) {\bf T-vector dot product and the Frobenius norm}:
Given any two t-vectors (i.e., two t-matrices, each having only one column) $X_\mathit{TV} $ and $Y_\mathit{TV} $ of the same length $D$, their dot product is the t-scalar defined by
\[
\langle X_\mathit{TV},  Y_\mathit{TV} \rangle \doteq \sum_{\alpha=1}^{D} \operatorname{conj}(X_\mathit{TV,\alpha}) \myconv
Y_{\mathit{TV},\alpha}\;\;.
\]
If $\langle X_\mathit{TV},  Y_\mathit{TV} \rangle = Z_{T}$, then $X_\mathit{TV}$ and
$Y_\mathit{TV}$ are said to be orthogonal. The nonnegative t-scalar
$\sqrt{\langle X_\mathit{TV},  X_\mathit{TV} \rangle}$
is called the generalized norm of $X_\mathit{TV}$ and denoted by
\begin{equation}
%\begin{matrix}
\|X_\mathit{TV}\|_{F} \doteq \sqrt{\langle X_\mathit{TV},  X_\mathit{TV} \rangle} \equiv \left({\sum\limits_{\alpha=1}^{D}r(X_{\mathit{TV},\alpha})^{2}}\right)^{1/2}
\label{equation:generalized-t-scalar-norm}
%\end{matrix}
\end{equation}
where $r(\cdot)$ is the absolute t-value as defined by equation (\ref{A5}).
The generalized Frobenius norm of a $D_{1}\times D_{2}$ t-matrix $W_\mathit{TM}$ is defined by
\begin{equation}
\|W_\mathit{TM}\|_{F} \doteq
\left(
{\sum\limits_{\alpha=1}^{D_{1}}\sum\limits_{\beta=1}^{D_{2}}r(W_{\mathit{TM},\alpha,\beta})^{2}}\right)^{1/2}.
\label{A7}
\end{equation}

In order to have a mechanism to connect t-matrices with matrices with elements in $\mathbb{C}$ or $\mathbb{R}$, the slices of a t-matrix are defined as follows.

(8) {\bf Slice of a t-matrix} :
Any t-matrix $X_\mathit{TM} \in C^{D_1\times D_2}$, organized as an array in $\mathbb{C}^{I_1\times \cdots \times I_N\times D_1\times D_2}$,
can be sliced
into
$\prod\nolimits_{n=1}^{N}I_{n}$ matrices in $\mathbb{C}^{D_1\times D_2}$, indexed by the multi-index $i$.
Let $X_\mathit{TM}(i) \in \mathbb{C}^{D_1\times D_2}$ be the $i$-th slice. The entries of $X_\mathit{TM}(i)$ are complex numbers in $\mathbb{C}$ given by
\[
(X_\mathit{TM}(i))_{\alpha,\beta} = X_{\mathit{TM},i, \alpha,\beta} \in \mathbb{C}~~
\]
for all indices $1\le i\le I ,
1\le \alpha\le D_1, 1\le \beta\le D_2$.

The t-vectors with a given dimension form an algebraic structure called a module over the ring $C$ \cite{Hungerford}. Modules are generalizations of vector spaces
\cite{Kilmer2013Third-TProduct003-0005}.
The t-vector whose entries are all equal to $Z_{T}$ is denoted by $Z_\mathit{TV}$, and called the zero t-vector. The next step is to define what is meant by a set of linearly independent t-vectors and what is meant by a full column rank t-matrix.

(9) {\bf Linear independence in t-vector module}:
The t-vectors in a subset  $\{X_{\mathit{TV}, 1},$ $X_{\mathit{TV}, 2},\cdots, X_{\mathit{TV}, K}  \}$
of a t-vector module are said to be linearly independent if the equation
$
\sum\nolimits_{k=1}^{K}\lambda_{T,k} \myconv X_{\mathit{TV},k} = Z_\mathit{TV}
$ holds true if and only if $\lambda_{T,k} = Z_{T}$, $1\le k\le K$.

If the t-vectors $X_{\mathit{TV},i}$, $1\le i\le K$, are linearly independent then they are said to have {a rank of $K$}. If the t-vectors $Y_{\mathit{TV},i}$ for $1\le i\le K'$ are linearly independent and span the same sub-module as the $X_{\mathit{TM},i}$ then $K = K'$. For further information see \cite{Hungerford}.

(10) {\bf Full column rank t-matrix}: A t-matrix is said to be of full column rank if
all its column t-vectors are linearly independent.

\subsection{T-matrix Analysis via the Fourier Transform}
\label{section:OperationsViaSlices}
%~\\~\vspace{-2em}

The Fourier transform of the t-matrix $X_\mathit{TM} \in \mathit{C}^{D_1\times D_2}$ is the t-matrix in $C^{D_1\times D_2}$ given by equation (\ref{equation:fourier-transform-t-matrix}).

Many t-matrix computations can be carried out efficiently using the Fourier transform.
For example, any multiplication
$C_\mathit{TM} = X_\mathit{TM} \myconv Y_\mathit{TM} \in
\mathbb{C}^{I_1\times \cdots \times I_N\times D_1\times D_2}$, where $X_\mathit{TM} \in \mathbb{C}^{I_1\times \cdots \times I_N\times D_1\times Q}$,
$Y_\mathit{TM} \in \mathbb{C}^{I_1\times \cdots \times I_N\times Q\times D_2}$,  can be decomposed to $\prod\nolimits_{n=1}^{N}I_{n}$ matrix multiplications over the complex numbers, namely
\begin{equation}
F(C_\mathit{TM})_{i, \alpha,\beta} = \sum\nolimits_{\gamma=1}^{Q}F(X_\mathit{TM})_{i, \alpha,\gamma} \cdot F(Y_\mathit{TM})_{i, \gamma,\beta}
\end{equation}
for all indices $1\le i\le I, 1\le \alpha\le D_1, 1\le\beta\le D_2$.

The conjugate transpose
$X_\mathit{TM}^{\myhtrans} \in$ $\mathbb{C}^{I_1\times \cdots \times I_N\times D_2\times D_1}$ of any t-matrix $X_\mathit{TM} \in \mathbb{C}^{I_1\times \cdots \times I_N\times D_1\times D_2}$
can be decomposed to $\prod\nolimits_{n=1}^{N}I_{n}$ canonical conjugate transposes of matrices,
\begin{equation}
F(X_\mathit{TM}^{\myhtrans})_{i,\beta,\alpha} = \overline{F(X_{\mathit{TM}})_{i, \alpha,\beta}}
\end{equation}
for all indices $1\le i\le I,1\le\alpha\le D_1,1\le \beta\le D_2$.
Each slice of $F\left(I_\mathit{TM}^{(D)}\right)$ is the canonical identity matrix with elements in $\mathbb{C}$.

The Fourier transform decomposes a t-matrix computation such as multiplication to $\prod\nolimits_{n=1}^{N}I_{n}$ independent complex matrix computations in the Fourier domain. The $i$-th ($1 \le i \le I$)
%($1 \le i \le \prod_{n=1}^{N}I_{n}$)
computation involves only the $i$-th slices of the associated t-matrices. This fact underlies an approach for speeding-up t-matrix algorithms using parallel computations.
This independence of the data in the Fourier domain makes it possible to implement parallel computing using the so-called vectorization programming (also known as array programming), which is supported by many programming languages including MATLAB, R, NumPy, Julia, and Fortran.

\subsection{Pooling}
%~\\~\vspace{-2em}

Sometimes, it is necessary to have a pooling mechanism to transform t-scalars to scalars
in $\mathbb{R}$ or $\mathbb{C}$. Given any t-scalar $X_{T} \in C$, its pooling result
$P(X_{T}) \in \mathbb{C}$ is defined by
\begin{equation}
P(X_T) =
(I_1\cdots I_N)^{-1}
\sum\limits_{i=1}^{I}X_{T,i} .
\label{equation:pooling-of-t-scalar}
\end{equation}

The pooling operation for t-matrices transforms each t-scalar entry to a scalar. More formally,
given any t-matrix $Y_\mathit{TM} \in C^{D_1\times D_2}$, its pooling result $P(Y_\mathit{TM})$ is
by definition the matrix in $\mathbb{C}^{D_1\times D_2}$ given by
\begin{equation}
\begin{aligned}
P(Y_\mathit{TM})_{\alpha,\beta} = P(Y_{\mathit{TM},\alpha,\beta}),
~1\le\alpha\le D_{1}, 1\le \beta\le D_{2}.
\end{aligned}
\label{equation:pooling-operation}
\end{equation}

The pooling of t-vectors is a special case of equation (\ref{equation:pooling-operation}).

\subsection{Generalized tensors}
\label{subsection:Generalizedt-scalars}
%~\\~\vspace{-2em}

Generalized tensors, called g-tensors, generalize t-matrices and canonical tensors. The generalized tensors defined in this section are used to construct the higher order TSVD in Section \ref{section:THOSVD}. A g-tensor, denoted by
$X_\mathit{GT} \in C^{D_1\times D_2 \times \cdots \times D_M}$,
is a generalized tensor with t-scalar entries (i.e., an order-$M$ array of t-scalars).
Its t-scalar entries are indexed by $(X_\mathit{GT})_{\alpha_1,\cdots ,\alpha_M}$.
Then, a generalized mode-$k$ multiplication of $X_\mathit{GT}$, denoted by
$M_\mathit{GT} \doteq
X_\mathit{GT} ~\myconv_{k}~ Y_\mathit{TM}$ where $Y_\mathit{TM} \in C^{J\times D_k}$
and $1 \le k \le M$,
is a g-tensor in $C^{D_1\times \cdots \times D_{k-1} \times J \times D_{k+1} \times \cdots \times D_{M} }$ defined as follows.
\begin{equation}
\begin{gathered}[]
(M_\mathit{GT})_{\alpha_1, \cdots, \alpha_{k-1}, \beta, \alpha_{k+1}, \cdots, \alpha_M}
\\
=
\sum\limits_{\alpha_k = 1}^{D_k} (X_\mathit{GT})_{\alpha_1,\cdots,\alpha_{k-1},\alpha_k,\alpha_{k+1}\cdots, \alpha_{M}}
\myconv (Y_\mathit{TM})_{\beta, \alpha_k}
\end{gathered}
\label{equation:generalized-mod-k-multiplication}
\end{equation}

The {\bf generalized mode-$k$ flattening} of a g-tensor
$X_\mathit{GT} \in C^{D_1\times D_2 \times \cdots \times D_M}$
is an $(K_1, K_2)$-reshaping where $K_1 = \{k\}$ and $K_2 = \{1,\cdots,M\} \setminus \{k\}$.
The result is a t-matrix
in $C^{D_{k} \times D_k^{-1}\cdot{\prod\nolimits_{m=1}^{M} D_{m}}}$. Each column of the matrix is obtained by holding the indices in $K_{2}$ fixed and varying the index in $K_{1}$.

The generalized mode-$k$ multiplication defined in equation
(\ref{equation:generalized-mod-k-multiplication}) can also be expressed
in terms of unfolded g-tensors:
$$
M_{\mathit{GT}} =
X_\mathit{GT} ~\myconv_{k}~ Y_\mathit{TM} \;\;
\Leftrightarrow \;\;
M_{\mathit{GT}(k)} =
Y_\mathit{TM} \myconv
X_{\mathit{GT}(k)}
$$
where
$M_{\mathit{GT}(k)} \in C^{J\times (D_1 \cdots D_{k-1} D_{k+1} \cdots  D_M)}$
and
$X_{\mathit{GT}(k)} \in C^{D_k\times (D_1 \cdots D_{k-1} D_{k+1} \cdots  D_M)}$
are respectively the generalized mode-$k$ flattening of
the g-tensors $M_{\mathit{GT}}$ and $X_{\mathit{GT}}$.

An example of a generalized tensor (g-tensor) $X_\mathit{GT} $ $\in$ $C^{2\times 3\times 2}\equiv
\mathbb{C}^{3\times 3\times 2\times 3\times 2}$,
its mode-$k$ flattening,
and its mode-$2$ multiplication with a t-matrix $Y_\mathit{TM} \in C^{2\times 3}\equiv \mathbb{C}^{3\times 3\times 2\times 3}$ are given {in a supplementary file}.

\section{Tensor Singular Value Decomposition}
\label{section:generalized-SVD}

The singular value decomposition (SVD) is a well known factorization of real or complex matrices \cite{GolubVanLoan}. It generalizes the eigen-decomposition of positive semi-definite normal matrices to non-square and non-normal matrices. The SVD has a wide range of applications in data analytics, including computing the pseudo-inverse of a matrix, solving linear least squares problems, low-rank approximation and
linear and multi-linear component analysis.
A tensor version TSVD of the SVD is described in Section \ref{section:TSVD}, and then applied in Section \ref{section:THOSVD} to obtain a tensor version, THOSVD, of the Higher Order SVD (HOSVD). Further information about the TSVD can be found in \cite{Kilmer2011Factorization-TProduct001-0006} and \cite{liaoliang-00016}.

\subsection{TSVD: Tensorial SVD}
\label{section:TSVD}

\emph{Algorithm}.
A tensor version, TSVD, of the singular value decomposition is described in this section and then applied in Section \ref{section:THOSVD} to obtain a tensor version of the High Order SVD (HOSVD). See \cite{liaoliang-00016} and \cite{Kilmer2011Factorization-TProduct001-0006}.

Given a t-matrix $X_\mathit{TM} \in C^{D_1\times D_2}$, let $Q \doteq \min(D_1, D_2)$.
The TSVD of $X_\mathit{TM}$ yields
the following three t-matrices $U_\mathit{TM} \in C^{D_1 \times Q}$,
$S_\mathit{TM} \in C^{Q\times Q}$ and $V_\mathit{TM} \in C^{D_2\times Q}$, such that
\begin{equation}
X_\mathit{TM} = U_\mathit{TM} \myconv S_\mathit{TM} \myconv  V_\mathit{TM}^{\myhtrans}
\label{equation:tSVD}
\end{equation}
where $U_\mathit{TM}^{\myhtrans}\myconv U_\mathit{TM} =
%\myidentityTmatrix{Q}$,$
V_\mathit{TM}^{\myhtrans}\myconv V_\mathit{TM} = I_\mathit{TM}^{(Q)}$,
%\myidentityTmatrix{Q}$,
$S_\mathit{TM} =
\operatorname{diag}(\lambda_{T, 1},\cdots,\lambda_{T, Q})$
and $\lambda_{T, 1},$ $\cdots,$ $\lambda_{T, Q} \in C$ are nonnegative, and satisfy
$F(\lambda_{T, 1})_{i}
\ge \cdots \ge
F(\lambda_{T, Q})_{i}\ge 0\;,~1\le i\le I.
$ The t-matrices $U_{\mathit{TM}}$ and $V_{\mathit{TM}}$ are generalizations of the orthogonal matrices in the SVD of a matrix with elements in $\mathbb{R}$ or $\mathbb{C}$.

Although it is possible to compute $U_\mathit{TM}$, $S_\mathit{TM}$ and
$V_\mathit{TM}$ in the spatial domain, it is preferable to organize the TSVD algorithm in the Fourier domain, because of the observation in Section \ref{FourierTransformOfAT-Scalar} that the Fourier transform converts the convolution product to the Hadamard product. The TSVD of $X_\mathit{TM}$ can be decomposed into $\prod\nolimits_{n=1}^{N}I_{n}$ SVDs of complex number matrices given by the slices of the Fourier transform $F(X_{\mathit{TM}})$. The t-matrices $U_\mathit{TM}$, $S_\mathit{TM}$ and $V_\mathit{TM}$ in equation (\ref{equation:tSVD}) are obtained in Algorithm \ref{algorithm:tSVD}.

\renewcommand{\algorithmicrequire}{\textbf{Inputs:}}
\renewcommand\algorithmicensure {\textbf{Outputs:}}
\begin{algorithm}[htb]
\caption{$(U_\mathit{TM}, S_\mathit{TM}, V_\mathit{TM}) = \mathtt{tsvd}(X_\mathit{TM})$}
\begin{algorithmic}[1]
\REQUIRE A t-matrix $X_\mathit{TM} \in C^{D_1\times D_2}$ as in equation (\ref{equation:tSVD})  and the t-scalar dimensions $I$.
\ENSURE The t-matrices $U_\mathit{TM} $, $S_\mathit{TM}$ and $V_\mathit{TM} $ as in equation (\ref{equation:tSVD})
\STATE  Compute the Fourier transformation~~$\tilde{X}_\mathit{TM}\leftarrow F(X_\mathit{TM})$
\FORALL {$1 \le i \le I$}
\STATE Compute the canonical SVD of $\tilde{X}_\mathit{TM}(i) \in \mathbb{C}^{D_1\times D_2}$
such that
%\vspace{-1em}
\begin{equation}
\tilde{X}_\mathit{TM}(i) = U_\mathit{mat} \cdot S_\mathit{mat}\cdot  V_\mathit{mat}^{H}
\nonumber
%\vspace{-1em}
\end{equation}
where
$U_\mathit{mat} \in \mathbb{C}^{D_1\times Q}$,
$S_\mathit{mat} \in \mathbb{C}^{Q\times Q}$,
$V_\mathit{mat} \in \mathbb{C}^{D_2\times Q}$,
$Q \doteq \min(D_1, D_2)$
and
$V_\mathit{mat}^{H}$ denotes the conjugate transpose of the complex matrix $V_\mathit{mat}$.
\STATE Assign the $i$-th slices  %of the t-matrices
$\tilde{U}_\mathit{TM}  \in C^{D_1\times Q}$,
$\tilde{S}_\mathit{TM} \in C^{Q\times Q}$  and
$\tilde{V}_\mathit{TM} \in C^{D_2\times Q}$ in the Fourier domain \newline
\vspace{-1em}
$$
\tilde{U}_\mathit{TM}(i) \leftarrow U_\mathit{mat},\;\;\;
\tilde{S}_\mathit{TM}(i) \leftarrow S_\mathit{mat}, \;\;\;
\tilde{V}_\mathit{TM}(i) \leftarrow V_\mathit{mat}.
\vspace{-0.5em}
$$
\ENDFOR
\STATE Compute the inverse transforms to obtain $U_\mathit{TM}$, $S_\mathit{TM}$ and $V_\mathit{TM}$\\
\vspace{-1em}
$$
U_\mathit{TM} \leftarrow F^{-1}(\tilde{U}_\mathit{TM}), \;
    S_\mathit{TM} \leftarrow F^{-1}(\tilde{S}_\mathit{TM}),  \;
    V_\mathit{TM} \leftarrow F^{-1}(\tilde{V}_\mathit{TM}).
\vspace{-0.5em}
$$
\STATE \textbf{return}
%\begin{equation}
%  \begin{matrix}
    $U_\mathit{TM}, \;
    S_\mathit{TM},  \;
    V_\mathit{TM}$.
%  \end{matrix}
%\end{equation}
\end{algorithmic}
\label{algorithm:tSVD}
\end{algorithm}

If $X_{\mathit{TM}}$ is defined over $\mathbb{R}$, then $U_{\mathit{TM}}$, $S_{\mathit{TM}}$ and $V_{\mathit{TM}}$ can be chosen such that they are defined over $\mathbb{R}$. It is sufficient to choose the slices $\tilde{U}_{\mathit{TM}}(i)$, $\tilde{S}_{\mathit{TM}}(i)$ and $\tilde{V}_{\mathit{TM}}(i)$ such that
$\tilde{U}_{\mathit{TM}}(i) = \overline{\tilde{U}}_{\mathit{TM}}(2-i)$,
$\tilde{S}_{\mathit{TM}}(i) = \overline{\tilde{S}}_{\mathit{TM}}(2-i)$ and
$\tilde{V}_{\mathit{TM}}(i) = \overline{\tilde{V}}_{\mathit{TM}}(2-i)$. When the t-scalar dimensions are given by $N = 1$, $I_{1}=1$,
TSVD reduces to the canonical SVD of a matrix
in $\mathbb{C}^{D_1\times D_2}$.
The properties of the SVD can be used to show that the t-matrix $S_\mathit{TM}$ in Algorithm \ref{algorithm:tSVD} is unique. The t-matrices $U_\mathit{TM}$ and $V_\mathit{TM}$ are not unique.

\emph{TSVD Approximation}.
{TSVD can be used to approximate data. Given a t-matrix $X_\mathit{TM} \in \mathit{C}^{D_1\times D_2}$,
let $Q \doteq \min(D_1, D_2)$ and let the TSVD of $X_{\mathit{TM}}$ be computed
as in equation (\ref{equation:tSVD}).
The low-rank approximation $\hat{X}_{\mathit{TM}}$ of $X_\mathit{TM}$
with rank of $r$ ($1 \le r \le Q$) is defined by
\begin{equation}
\hat{X}_\mathit{TM} = U_\mathit{TM} \myconv \hat{S}_\mathit{TM} \myconv V_\mathit{TM}^{\myhtrans} \;\;
\label{equation:TSVD-low-rank-approximation111}
\end{equation}
where $\hat{S}_\mathit{TM} = \operatorname{diag}(\lambda_{T, 1}, \cdots,\lambda_{T, r},
\underset{Q-r}{\underbrace{Z_{T}, \cdots, Z_{T}}} )$ and $\lambda_{T, 1},$ $\cdots,$ $\lambda_{T, r} \neq Z_T$.

When the t-scalar dimensions are given by $N = 1$, $I_{1} = 1$, equation (\ref{equation:TSVD-low-rank-approximation111}) reduces to the SVD {low-rank} approximation to a matrix in $\mathbb{C}^{D_1\times D_2}$.

Furthermore, we contend that the approximation $\hat{X}_\mathit{TM}$ computed as in equation
(\ref{equation:TSVD-low-rank-approximation111}) is the solution of the following optimization problem
\begin{equation}
\begin{aligned}
X_\mathit{TM}^\mathit{approx} = \mathop{\operatorname{argmin}}\nolimits_{Y_\mathit{TM} \in C^{D_1\times D_2}} \|X_\mathit{TM} - Y_\mathit{TM}\|_F \\
\text{subject to} \operatorname{rank}(Y_\mathit{TM}) \le r\cdot E_T
\end{aligned}
\label{equation:generalized-Eckart-theorem}
\end{equation}
where $\|\cdot\|_F$ denotes the generalized Frobenius norm of a t-matrix, which is a nonnegative t-scalar,
as defined in equation (\ref{A7}). The result $X_\mathit{TM}^\mathit{approx}$ generalizes the Eckart-Young-Mirsky theorem \cite{eckart1936approximation}.

{To have an optimization problem in the form of  (\ref{equation:generalized-Eckart-theorem}), the notation
$\operatorname{rank}(\cdot)$, i.e., the rank of a t-matrix, and $\min(\cdot)$, i.e., the minimization of a nonnegative t-scalar variable belonging to a subset of $S^\mathit{nonneg}$,
and the ordering relationship $\le$ between two nonnegative t-scalars
need to be defined.

These definitions generalize their canonical counterparts. The definitions and the generalized Eckart-Young-Mirsky theorem are discussed in an appendix.

\subsection{THOSVD: Tensor Higher Order SVD}
\label{section:THOSVD}

In multilinear algebra, the higher order singular value decomposition (HOSVD), also known as the orthogonal Tucker decomposition of a tensor, is a generalization of the SVD. It is commonly
used to extract directional information from multi-way arrays \cite{tucker1966some-00012,de2000multilinear-0007}.
The applications
of HOSVD include
data analytics \cite{vannieuwenhoven2012new-00013,taguchi2017tensor},
machine learning \cite{Vasilescu2002Human,vasilescu2002multilinear-00014,lu2008mpca-00010},
DNA and RNA analysis \cite{Omberg2007A,muralidhara2011tensor} and
texture mapping in computer graphics \cite{vasilescu2004tensortextures}.

On using the t-scalar algebra,
the HOSVD can be generalized further to obtain a tensorial HOSVD, called THOSVD.
The THOSVD is obtained by replacing the complex number elements of each multi-way array by t-scalar elements.
Based on the definitions of g-tensors in Section \ref{subsection:Generalizedt-scalars}, the THOSVD of
$X_\mathit{GT} \in C^{D_1\times D_2 \times \cdots \times D_M}$ is given by the following generalized mode-$k$ multiplications.
\begin{equation}
X_\mathit{GT} =  S_\mathit{GT} \myconv_{1}~ U_{\mathit{TM}, 1}
\myconv_{2}~ U_{\mathit{TM}, 2} \cdots
\myconv_{M}~ U_{\mathit{TM}, M}
\label{equation:THOSVD}
\end{equation}
where $S_\mathit{GT} \in C^{Q_1\times Q_2 \times \cdots \times Q_M}$ is called the core g-tensor, $U_{\mathit{TM}, k} \in C^{D_k\times Q_k}$ is the mode-$k$ factor t-matrix and $Q_k \doteq
\min(D_k, D_{k}^{-1}\prod\nolimits_{m=1}^{M}D_m)$ for $1\le k\le M$.

Given a g-tensor $X_\mathit{GT} \in C^{D_1\times D_2 \times \cdots \times  D_M}$, the THOSVD of $X_\mathit{GT}$, as in equation (\ref{equation:THOSVD}), is obtained in Algorithm \ref{algorithm:THOSVD}, using a strategy analogous to that of Tucker \cite{tucker1966some-00012} and
De Lathauwer et al. \cite{de2000multilinear-0007} for computing the HOSVD of a tensor with elements in $\mathbb{R}$ or $\mathbb{C}$.

\renewcommand{\algorithmicrequire}{\textbf{Input:}}
\renewcommand\algorithmicensure {\textbf{Outputs:}}
\begin{algorithm}[tb]
\caption{THOSVD}
\begin{algorithmic}[1]
\REQUIRE $X_\mathit{GT} \in C^{D_1\times D_2 \times \cdots \times D_M}$.
\ENSURE $U_{\mathit{TM}, 1},~ U_{\mathit{TM}, 2},~ \cdots,~ U_{\mathit{TM}, M}$ and  $S_\mathit{GT}$ as in equation (\ref{equation:THOSVD})
\FORALL {$1 \le k \le M$ }
\STATE Construct the generalized mode-$k$ flattening
$X_\mathit{GT(k)} \in C^{D_{k}\times D_{k}^{-1}\prod_{m=1}^{M} D_m}$.
\STATE $(U_{\mathit{TM}, k}, \mathtt{\sim}, \mathtt{\sim}) \leftarrow \mathtt{tsvd}(X_\mathit{GT(k)})$
\ENDFOR
\STATE $S_\mathit{GT} \leftarrow X_\mathit{GT} \myconv_{1}~ U_{\mathit{TM}, 1}^{\myhtrans}
\myconv_{2}~
U_{\mathit{TM}, 2}^{\myhtrans}
\cdots
\myconv_{M}~ U_{\mathit{TM}, M}^{\myhtrans}$
\STATE \textbf{return}
    $U_{\mathit{TM}, 1},~ U_{\mathit{TM}, 2},~ \cdots,~ U_{\mathit{TM}, M}$ and  $S_\mathit{GT}$.
\end{algorithmic}
\label{algorithm:THOSVD}
\end{algorithm}

Note that THOSVD generalizes the HOSVD for canonical tensors, TSVD for t-matrices,
and SVD for canonical matrices.
%with elements in $\mathbb{R}$ or $\mathbb{C}$.
Many SVD and HOSVD based algorithms can be generalized by TSVD and THOSVD, respectively.

\section{Tensor Based Algorithms}
\label{section:image-analysis-SVD}

Three tensor based algorithms are proposed. They are Tensorial Principal Component Analysis (TPCA), Tensorial Two-Dimensional Principal Component Analysis (T2DPCA) and Tensorial Grassmannian Component Analysis (TGCA). TPCA and T2DPCA are generalizations of the well-known algorithms PCA and 2DPCA \cite{Yang2004TwoD}. TGCA is a generalization of the recent GCA algorithm \cite{Harandi2015Sparse,Harandi2015Extrinsic}.
It is possible to generalize many other linear or multi-linear algorithms using similar methods.

\subsection{TPCA: Tensorial Principal Component Analysis}

Principal Component Analysis (PCA) is a well known algorithm for extracting
the prominent components of observed vectors. PCA is generalized to TPCA in a straightforward manner.
Let $X_{\mathit{TV}, 1}, \cdots, X_{\mathit{TV}, K} \in C^{D}$ be $K$ given t-vectors. Then, the covariance-like t-matrix $G_\mathit{TM} \in C^{D\times D}$  is defined by
\begin{equation}
G_\mathit{TM} = \frac{1}{K-1}\sum\limits_{k=1}^{K} (X_{\mathit{TV}, k} - \bar{X}_\mathit{TV}  ) \myconv
(X_{\mathit{TV}, k} - \bar{X}_\mathit{TV}  )^{\myhtrans}
\end{equation}
where $\bar{X}_\mathit{TV} = (1/K)~ \sum\nolimits_{k=1}^{K} X_{\mathit{TV}, k}$. It is not difficult to verify that
$G_\mathit{TM}$ is Hermitian, namely
$G_\mathit{TM}^{\myhtrans} = G_\mathit{TM}$.

The t-matrix $U_\mathit{TM} \in C^{D\times D}$ is computed from the TSVD of $G_\mathit{TM}$ as in Algorithm \ref{algorithm:tSVD}. Then, given any t-vector $Y_\mathit{TV} \in C^{D}$, its feature t-vector $Y^\mathit{feat}_\mathit{TV} \in C^{D}$
is defined by
\begin{equation}
Y^\mathit{feat}_\mathit{TV} = U_\mathit{TM}^{\myhtrans} \myconv
(Y_\mathit{TV} - \bar{X}_\mathit{TV}) \;\;.
\end{equation}
To reduce $Y^\mathit{feat}_\mathit{TV}$ from a t-vector in $C^{D}$ to a t-vector in $C^{d}$ ($D>d$),
simply discard the last $(D-d)$ t-scalar entries of $Y^\mathit{feat}_\mathit{TV}$.

In algebraic terminology, the column t-vectors of $U_\mathit{TM}$ span a linear sub-module of t-vectors, which is a generalization of a vector subspace \cite{Braman2010Third-TProduct002-0001}.
In this sense, each t-scalar entry of $Y^\mathit{feat}_\mathit{TV}$ is a generalized coordinate of the projection of the t-vector $(Y_\mathit{TV} - \bar{X}_\mathit{TV} )$ onto the sub-module. The
{low-rank} reconstruction
$Y_\mathit{TV}^\mathit{rec}\in C^{D}$ with the parameter $d$  is given by
\begin{equation}
Y_\mathit{TV}^\mathit{rec} = \mybrace{U_\mathit{TM}}_{:, 1:d}\myconv \mybrace{Y^\mathit{feat}_\mathit{TV}}_{1:d} +\bar{X}_\mathit{TV}
\label{equation:reconstruction-of-TPCA}
\end{equation}
where $\mybrace{U_\mathit{TM}}_{:, 1:d} \in C^{D\times d}$ denotes the t-matrix containing the first $d$ t-vector columns of $U_\mathit{TM} \in C^{D\times D}$
and $\mybrace{Y^\mathit{feat}_\mathit{TV}}_{1:d} \in C^{d}$ denotes the t-vector containing the first $d$ t-scalar entries of $Y^\mathit{feat}_\mathit{TV} \in C^{D}$.

Note that PCA is a special case of TPCA. When the t-scalar dimensions are given by $N = 1$, $I_{1}=1$, TPCA reduces to PCA.

\subsection{T2DPCA: Tensorial Two-dimensional Principal Component Analysis}
The algorithm 2DPCA is an extension of PCA proposed by Yang et al. \cite{Yang2004TwoD}
for analysing the principal components of matrices.
Although 2DPA is written in a non-centred row-vector oriented form in the original paper \cite{Yang2004TwoD}, it is rewritten here in a centred column-vector oriented form, which is consistent with the formulation of PCA.  The centred column-vector oriented form of 2DPCA is chosen for discussing its generalization to T2DPCA (Tensorial 2DPCA).

Similar to TPCA, T2DPCA also finds sub-modules, but they are obtained by analysing t-matrices.
Let $X_{\mathit{TM}, 1},$ $\cdots,$ $X_{\mathit{TM}, K} \in C^{D_1\times D_2}$ be the $K$ observed t-matrices. Then, the Hermitian covariance-like t-matrix
$G_\mathit{TM}\in C^{D_1\times D_1}$ is given by
\begin{equation}
G_\mathit{TM} = \frac{1}{K-1} \sum\limits_{k=1}^{K} (X_{\mathit{TM}, k} - \bar{X}_\mathit{TM}  ) \myconv
(X_{\mathit{TM}, k} - \bar{X}_\mathit{TM})^{\myhtrans}
\end{equation}
where $\bar{X}_\mathit{TM} = (1/K)~ \sum\nolimits_{k=1}^{K} X_{\mathit{TM}, k}$.

Then, the t-matrix $U_\mathit{TM} \in C^{D_1\times D_1}$ is computed from the TSVD of $G_\mathit{TM}$
as in Algorithm \ref{algorithm:tSVD}.
Given any t-matrix $Y_\mathit{TM} \in C^{D_1\times D_2}$, its feature t-matrix $Y^\mathit{feat}_\mathit{TM} \in C^{D_1\times D_2}$
is a centred t-matrix projection (i.e., a collection of
centred column t-vector projections) on the module spanned by $U_\mathit{TM}$, namely
\begin{equation}
Y^\mathit{feat}_\mathit{TM} = U_\mathit{TM}^{\myhtrans} \myconv
(Y_\mathit{TM} - \bar{X}_\mathit{TM}) \;.
\end{equation}

To reduce $Y^\mathit{feat}_\mathit{TM}$ from a t-matrix in
$C^{D_1\times D_2}$ to a t-matrix in $C^{d\times D_2}$ ($D_1>d$), simply discard the last $(D_1-d)$ row t-vectors of $Y^\mathit{feat}_\mathit{TM}$.

The T2DPCA reconstruction with the parameter $d$ is given by $Y_\mathit{TM}^\mathit{rec} \in C^{D_1\times D_2}$ as follows.
\begin{equation}
Y_\mathit{TM}^\mathit{rec} = {U_\mathit{TM}}_{:, 1:d} \myconv \mybrace{Y^\mathit{feat}_\mathit{TM}}_{1:d, :}
+ \bar{X}_\mathit{TM}
\label{equation:T2DPCA-reconstruction}
\end{equation}
where
$\mybrace{U_\mathit{TM}}_{:, 1:d} \in C^{D_1\times d}$  denotes the t-matrix containing the first $d$ column t-vectors of $U_\mathit{TM}$
and
$\mybrace{Y_\mathit{TM}^\mathit{feat}}_{1:d, :}$ $\in$ $C^{d\times D_2}$ denotes the t-matrix containing the first $d$ row t-vectors of $Y^\mathit{feat}_\mathit{TM}$.

When the t-scalar dimensions are given by $N = 1$, $I_{1}=1$, T2DPCA reduces to 2DPCA.
In addition, TPCA is a special case of T2DPCA.
When $D_2 = 1$, T2DPCA reduces to TPCA. Furthermore, when $N=1, I_1 = 1$ and $D_2 = 1$, T2DPCA reduces to PCA.

\subsection{TGCA: Tensorial Grassmannian Component Analysis}

A t-matrix algorithm which generalizes the recent algorithm for Grassmannian Component Analysis (GCA) is proposed. An example of GCA an be found in \cite{Harandi2015Extrinsic},     where it forms part of an algorithm for sparse coding on Grassmann manifolds.
In this section GCA is extended to its generalized version called TGCA (Tensorial GCA).

In TGCA, each measurement is a set of t-vectors organized into a ``thin'' t-matrix, with the number of rows larger than the number of columns. Let $X_{\mathit{TM}, 1},$ $\cdots,$ $X_{\mathit{TM}, K}$ $\in$ $C^{D\times d}$ ($D > d$) be the observed t-matrices.
Then, the t-vector columns of each t-matrix are first orthogonalized. Using the t-scalar algebra, it is straightforward to generalize the classical Gram-Schmidt orthogonalization process for t-vectors.
%\footnote{
The TSVD can also be used to orthogonalise a set of t-vectors.  In GCA and TGCA, the choice of orthogonalization algorithm doesn't matter as long as the algorithm is consistent for
all sets of vectors and t-vectors.
%}

Given a t-matrix $Y_\mathit{TM} \in C^{D\times d}$, let  $\dot{Y}_\mathit{TM} \in C^{D\times d}$ be the corresponding unitary orthogonalized t-matrix  (namely, $\dot{Y}_\mathit{TM}^{\myhtrans} \circ \dot{Y}_\mathit{TM} = I_\mathit{TM}^{(d)}$ ) computed from  $Y_\mathit{TM}$.
Let $(Y_\mathit{TM})_{:, k}$ be the $k$-th column t-vector of $Y_\mathit{TM}$ and let
$(\dot{Y}_\mathit{TM})_{:, k}$ be the $k$-th column t-vector of $\dot{Y}_\mathit{TM}$ for $1\le k\le d$.
The generalized Gram-Schmidt orthogonalization is given
by Algorithm \ref{algorithm:generalized-Schmidt}.

\renewcommand{\algorithmicrequire}{\textbf{Input:}}
\renewcommand\algorithmicensure {\textbf{Outputs:}}
\begin{algorithm}[tb]
\caption{Generalized Gram-Schmidt orthogonalization}
\begin{algorithmic}[1]
\REQUIRE $Y_{\mathit{TM}} \in C^{D\times d}$.
\ENSURE $\dot{Y}_\mathit{TM} \in C^{D\times d}$ satisfying $\dot{Y}_\mathit{TM}^{\myhtrans} \circ \dot{Y}_\mathit{TM} = I_\mathit{TM}^{(d)}$.
\STATE $(\dot{Y}_\mathit{TM})_{:, 1} \leftarrow
{\|(Y_\mathit{TM})_{:, 1}\|_{F}^{-1}  } \circ (Y_\mathit{TM})_{:, 1}$
\FORALL {$2 \le k \le d$ }
\STATE $(\dot{Y}_\mathit{TM})_{:, k} \leftarrow (Y_\mathit{TM})_{:, k}$
\FORALL {$1 \le j \le k -1$ }
\STATE $(\dot{Y}_\mathit{TM})_{:, k} \leftarrow (\dot{Y}_\mathit{TM})_{:, k} -
{\langle (\dot{Y}_\mathit{TM})_{:, k}, (\dot{Y}_\mathit{TM})_{:, j} \rangle } \circ (\dot{Y}_\mathit{TM})_{:, j}$
\ENDFOR
\STATE $(\dot{Y}_\mathit{TM})_{:, k} \leftarrow
{\| (\dot{Y}_\mathit{TM})_{:, k}\|_F^{-1}} \circ {(\dot{Y}_\mathit{TM})_{:, k}}$
\ENDFOR
\STATE \textbf{return} $\dot{Y}_\mathit{TM}$
\end{algorithmic}
\label{algorithm:generalized-Schmidt}
\end{algorithm}

\newcommand\dotX[1]{\dot{X}_{\mathit{TM},#1} }

Let $\dotX{k}  \in C^{D\times d}$ be
the unitary orthogonalized t-matrices computed from
$X_{\mathit{TM}, k}$ for $1\le k\le K$.
Then,
for $1 \le k,k' \le K$, the $(k,k')$ t-scalar entry of the symmetric
t-matrix $G_\mathit{TM} \in C^{K\times K}$ is nonnegative and given by
\begin{equation}
(G_\mathit{TM})_{k,k'} = \|
\dotX{k}^{\myhtrans} \myconv
\dotX{k'}\|_F^{2}
\;,\;\;1\le k, k'\le K
\label{equation:GTM-TGCA}
\end{equation}
where $\|\cdot\|_{F}$ is the generalized Frobenius norm of a t-matrix, as defined by equation (\ref{A7}).

Given any query t-matrix sample $Y_\mathit{TM} \in C^{D\times d}$, let  $\dot{Y}_\mathit{TM} \in C^{D\times d}$ be the corresponding unitary orthogonalized t-matrix computed from  $Y_\mathit{TM}$. Then,
the
$k$-th t-scalar entry of $K_\mathit{TV} \in C^{K}$
is computed as follows.
\begin{equation}
(K_\mathit{TV})_k =
\|
\dot{Y}_\mathit{TM}^{\myhtrans} \myconv
\dotX{k}\|_F^{2}\;\;
, \;\;
1\le k\le K.
\end{equation}

Since $G_\mathit{TM}$, computed as in equation (\ref{equation:GTM-TGCA}), is symmetric, the TSVD of $G_\mathit{TM}$ has the following form
\begin{equation}
G_\mathit{TM} = U_\mathit{TM} \myconv S_\mathit{TM} \myconv  U_\mathit{TM}^{\myhtrans}\;.
\end{equation}

\newcommand\mmydiag[2]{\operatorname{diag}(#1,\cdots,#2) }

Furthermore, if it is assumed that the diagonal entries
$S_\mathit{TM} \doteq \mmydiag{\lambda_{T, 1}}{\lambda_{T, K}}$
are all strictly positive, then the multiplicative inverse of
$\lambda_{T,k}$ exists for $1\le k\le K$. The t-matrix
$S_\mathit{TM}^{1/2} \doteq \mmydiag{\sqrt{\lambda_{T, 1}}}{\sqrt{\lambda_{T, K}}}$
is called the t-matrix square root of $S_\mathit{TM}$ and the t-matrix
$S_\mathit{TM}^{-1/2} \doteq \mmydiag{\frac{E_T}{\sqrt{\lambda_{T, 1}}}}{
\frac{E_T}{\sqrt{\lambda_{T, K}}}}$
is called the inverse t-matrix of $S_\mathit{TM}^{1/2}$.

Thus,
the features of the t-matrix sample $Y_\mathit{TM} \in C^{D\times d}$ are given by the t-vector $Y_\mathit{TV}^\mathit{feat} \in C^{K}$ as
\begin{equation}
Y_\mathit{TV}^\mathit{feat} = S_\mathit{TM}^{-{1}/{2}} \myconv U_\mathit{TM}^{\myhtrans} \myconv
K_\mathit{TV}
\end{equation}
and the features of the $k$-th measurement $X_{\mathit{TM}, k}$ are given by the t-vector $X_{\mathit{TV}, k}^\mathit{feat}$ as follows.
\begin{equation}
X_{\mathit{TV}, k}^\mathit{feat} = S_\mathit{TM}^{-{1}/{2}} \myconv U_\mathit{TM}^{\myhtrans} \myconv \mybrace{G_\mathit{TM}}_{:, k}
\;\;,
1 \le k \le K
\label{equation:TGCA-t-vector-forX}
\end{equation}
where $\mybrace{G_\mathit{TM}}_{:, k}$ denotes $k$-th t-vector column of $G_\mathit{TM}$. It is not difficult to verify that $S_\mathit{TM}^{-{1}/{2}} \myconv U_\mathit{TM}^{\myhtrans} \myconv G_\mathit{TM} \equiv S_\mathit{TM}^{{1}/{2}} \myconv U_\mathit{TM}^{\myhtrans}$.
This yields the following compact form for $X_{\mathit{TV},k}^\mathit{feat}$.
\begin{equation}
X_{\mathit{TV}, k}^\mathit{feat} = \mybrace{S_\mathit{TM}^{{1}/{2}} \myconv U_\mathit{TM}^{\myhtrans}}_{:, k}
\;\;,
1 \le k \le K
\label{equation:TGCA-t-vector-forX-second}
\end{equation}
where $\mybrace{S_\mathit{TM}^{{1}/{2}} \myconv U_\mathit{TM}^{\myhtrans}}_{:, k}$ denotes the $k$-th t-vector column of the t-matrix $(S_\mathit{TM}^{{1}/{2}} \myconv U_\mathit{TM}^{\myhtrans})$. The equation (\ref{equation:TGCA-t-vector-forX-second}) is more efficient in computations than
equation (\ref{equation:TGCA-t-vector-forX}).

The dimension of a TGCA feature t-vector is reduced from $K$ to $K'$  ($K > K'$) by discarding the last $(K-K')$ t-scalar entries.
It is noted that GCA is a special case of TGCA when the dimensions of the t-scalars are given by $N = 1$, $I_{1} = 1$.

\section{Experiments}
\label{section:experiments}

The results obtained from TSVD, THOSVD, TPCA, T2DPCA, TGCA and their precursors are compared in applications to {low-rank} approximation in Section \ref{section:low-rank-approximation}, reconstruction
in Section \ref{section:Reconstruction} and supervised classification of images in Section \ref{section:Classification}.

{In these experiments ``vertical'' and ``horizontal'' comparisons between generalised algorithms and the corresponding canonical algorithms are made.

In a ``vertical'' experiment, tensorized data is obtained from the canonical data in $3\times 3$ neighborhoods. The associated t-scalar is a $3\times 3$ array. To make the vertical comparison fair, we put the central slices of a generalized result into the original canonical form and then compare it with the result of the associated canonical algorithm.

In a ``horizontal'' comparison, a generalized order-$N$ array of order-two t-scalars is equivalent to a canonical order-$(N+2)$ array of scalars. Therefore, a generalized algorithm based on order-$N$ arrays of order-two t-scalars is compared with a canonical algorithm based on order-$(N+2)$ arrays of scalars.
}

\subsection{{Low-rank} Approximation}
\label{section:low-rank-approximation}
TSVD approximation is computed as in equation (\ref{equation:TSVD-low-rank-approximation111}).
THOSVD approximation generalizes low-rank approximation by TSVD and low-rank approximation by HOSVD. To simplify the calculations, the approximation is obtained for a g-tensor $X_{\mathit{GT}}$ in $C^{D_{1}\times D_{2}\times D_{3}}$. Let
$Q_k \doteq \min(D_k, D_{k}^{-1}D_1D_2D_3)$ for $k = 1, 2, 3$. The THOSVD of $X_\mathit{GT}$ yields
\begin{equation}
X_\mathit{GT} =  S_\mathit{GT} \myconv_{1}~ U_{\mathit{TM}, 1}
\myconv_{2}~ U_{\mathit{TM}, 2}
\myconv_{3}~ U_{\mathit{TM}, 3}
\end{equation}
where $U_{\mathit{TM}, k} \in C^{D_k \times Q_k}$ for $k=1,2,3$
and $S_\mathit{TM} \in C^{Q_1\times Q_2\times Q_3}$.

The low-rank approximation $\hat{X}_\mathit{GT} \in C^{D_1\times D_2\times  D_3}$
to $X_{\mathit{GT}}$ and with multilinear rank tuple $(r_1, r_2, r_3)$,
($1 \le r_k \le Q_k$ for all $ k = 1,2, 3$),
is computed as in equation (\ref{THOSVD-low-rank-approximation}),
where
$(U_{\mathit{TM}, k})_{:, 1: r_k}$ denotes the t-matrix containing the first
$r_k$
t-vector columns of $U_{\mathit{TM}, k}$
for $k = 1,2,3$
and $(S_\mathit{GT})_{1: r_1, 1: r_2, 1: r_3} \in C^{r_1\times r_2\times r_3}$
denotes the g-tensor containing the first $r_1\times r_2 \times r_3$ t-scalar entries of $S_\mathit{GT}$.

\begin{equation}
\begin{aligned}
\hat{X}_\mathit{GT} =
(S_\mathit{GT})_{1: r_1, 1: r_2, 1: r_3} \myconv_{1}~&  (U_{\mathit{TM}, 1})_{:, 1: r_1}
%\\
\myconv_{2}~
(U_{\mathit{TM}, 2})_{:, 1: r_2}
\myconv_{3}~
%&
(U_{\mathit{TM}, 3})_{:, 1: r_3} \;\;.
\label{THOSVD-low-rank-approximation}
\end{aligned}
\end{equation}

When the t-scalar dimensions are given by $N = 1$, $I_{1} = 1$, equation (\ref{THOSVD-low-rank-approximation}) reduces to the
HOSVD {low-rank} approximation of a tensor in $\mathbb{C}^{D_1 \times D_2  \times D_3}$. When the g-tensor dimension $D_3 = 1$, equation (\ref{THOSVD-low-rank-approximation}) reduces to the SVD {low-rank} approximation of a canonical matrix in $\mathbb{C}^{D_1\times D_2}$.

\subsubsection{{TSVD versus SVD --- A ``Vertical'' Comparison}}

The low-rank approximation performances of TSVD and SVD are compared. In the experiment, the test sample is the $512\times 512\times 3$ RBG Lena image downloaded from
Wikipedia.\footnote{\url{https://en.wikipedia.org/wiki/Lenna}}.

For the SVD low-rank approximations, the RGB Lena image is split into three $512\times 512$ monochrome images. Each monochrome image is analyzed using the SVD.
{The three extracted monochrome Lena images are
order-two arrays in $\mathbb{R}^{512\times 512}$.}
Each monochrome Lena image is tensorized to produce a t-image
(a generalized monochrome image) in $R^{512\times 512} \equiv \mathbb{R}^{3 \times 3\times  512\times  512}$. In the tensorized version of the image each pixel value is replaced by a $3\times 3$ square of values obtained from the $3\times 3$ neighborhood  of the pixel. Padding with $0$ is used where necessary at the boundary of the image.

To evaluate the TSVD approximations in a manner relevant to the SVD approximations, upon obtaining a
t-image approximation $\hat{X}_\mathit{TM} \in \mathbb{R}^{3\times 3\times 512\times 512}$, the part $\hat{X}_\mathit{MT}(i)|_{i = (2,2)} \in \mathbb{R}^{512\times 512}$, i.e. the central slice of the TSVD approximation, is used for comparisons.

Given an array $X$ of any order over the real numbers $\mathbb{R}$, let $\hat{X}$ be an approximation to $X$.  Then, the PSNR (Peak Signal-to-Noise Ratio) for $\hat{X}$ is defined as in \cite{Almohammad2010StegoImage} by
\begin{equation}
\mathit{PSNR} = 20 \log_{10} \frac{\mathit{MAX}\cdot \sqrt{N^\mathit{entry}}}{\|X -\hat{X} \|_F}
\label{equation:PSNR}
\end{equation}
where $N^\mathit{entry}$ denotes the number of real number entries of $X$, $\|X- \hat{X}\|_F$ is the canonical Frobenius norm of the array $(X-\hat{X})$ and $\mathit{MAX}$ is the maximum possible value of the entries of $X$.
In all the experiments, $\mathit{MAX} = 255$. 
%In this experiment comparing TSVD and SVD, $N^\mathit{entry} = 512\times 512 = 262144$.

Figure \ref{fig:lena-approximation} shows the PSNR curves of the SVD and TSVD approximations as functions of the rank of $\hat{X}$.
It is clear that the PSNR of the TSVD approximation is consistently higher than that of SVD approximation. When the rank $r = 500$, the PSNRs of TSVD and SVD differ by more than than $37$ dBs.

\begin{figure*}[htb]
\begin{center}
\includegraphics[width=0.7\textwidth]{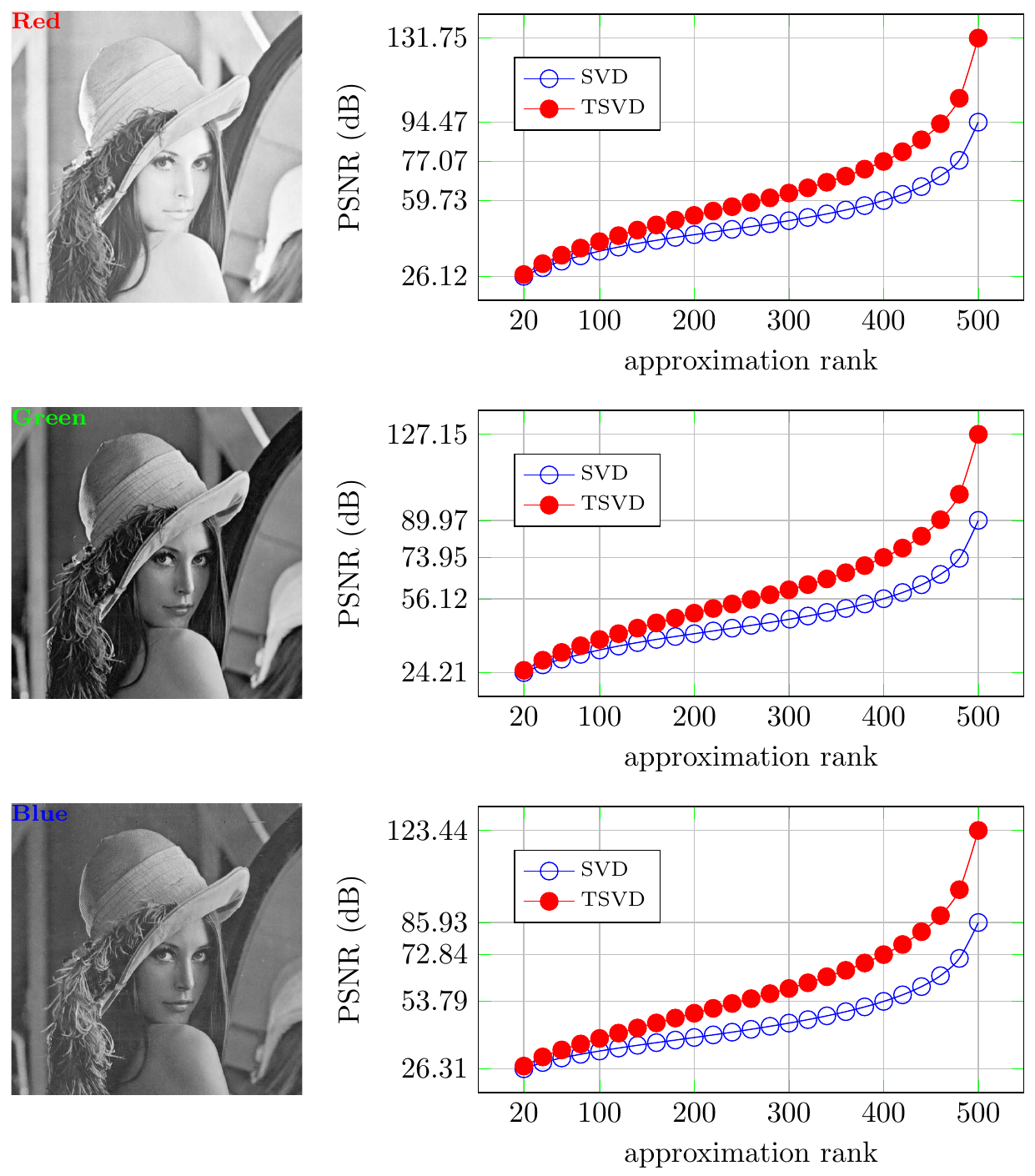}
\end{center}
\caption{A ``vertical'' comparison of low-rank approximations by SVD and TSVD for each monochrome Lena image.~
First column: Monochrome images extracted from the RGB Lena image.~
Second column: PSNR curves of SVD/TSVD approximation on/for each monochrome image.
}
\label{fig:lena-approximation}
\end{figure*}

\subsubsection{TSVD versus HOSVD --- A ``Horizontal'' Comparison}
Given a monochrome Lena image as an order-two array in $\mathbb{R}^{512\times 512}$ and its tensorized form as an order-four array in $\mathbb{R}^{3\times 3\times 512\times 512}$, TSVD yields an approximation array in $\mathbb{R}^{3\times 3\times 512\times 512}$.
Since the HOSVD is applicable to order-four arrays
in $\mathbb{R}^{3\times 3\times 512\times 512}$, we give a ``horizontal'' comparison of the performances of TSVD and HOSVD.

More specifically, given a generalized monochrome Lena image
$X_\mathit{TM} \equiv X \in C^{512\times 512} \equiv$ $\mathbb{R}^{3\times 3 \times 512\times 512}$ and a specified rank $r$, the TSVD approximation yields
a t-matrix $\hat{X}_\mathit{TM} \in C^{512\times 512} \equiv \mathbb{R}^{3\times 3 \times 512\times 512}$, which is computed as in equation (\ref{equation:TSVD-low-rank-approximation111}) with $D_1 = 512$ and $D_2 = 512$.

Let the HOSVD of $X \in \mathbb{R}^{3\times 3\times 512\times 512}$ be
$ X = S ~\times_1~ U_1 ~\times_2~ U_2 ~\times_3~ U_3 ~\times_4~ U_4
$ where $S \in \mathbb{R}^{3\times 3 \times 512\times 512} $ denotes the core tensor, and
$U_1 \in \mathbb{R}^{3\times 3}$,
$U_2 \in \mathbb{R}^{3\times 3}$,
$U_3 \in \mathbb{R}^{512\times 512}$,
$U_4 \in \mathbb{R}^{512\times 512}$ are all orthogonal matrices.
Then, to give a ``horizontal'' comparison with the TSVD approximation $\hat{X}_\mathit{TM}$ with
rank $r$,
the HOSVD approximation $\hat{X} \in \mathbb{R}^{3\times 3\times  512\times  512}$ is given by the multi-mode product
\begin{equation}
\begin{aligned}
\hat{X} =
(S)_{:, :, 1:r, 1:r} \times_1~ U_1 \times_2~ U_2 &~\times_3~ (U_3)_{:, 1: r} 
~\times_4~ (U_4)_{:, 1:r}  \;\;.
\end{aligned}
\end{equation}

The PSNRs TSVD and HOSVD are computed as in equation (\ref{equation:PSNR}) with $\mathit{MAX} = 255$ and $N^\mathit{entry} = 3\times 3\times 512\times 512 =  2359296$.

For each of the generalized monochrome Lena images (respectively marked by the channel type ``red'', ``green'' and ``blue''), as a ${3\times 3\times 512\times 512}$ real number array,
the PSNRs of TSVD and HOSVD are given in Figure \ref{fig:lapproximation_on_each_generalized_monochrome-lena-image}.

As rank $r$ is varied, the PSNR of TSVD approximation is always higher than that of
the corresponding HOSVD approximation. When rank $r$ is equal to 500, the PSNRs of TSVD and HOSVD approximations differ significantly.

\begin{figure*}[htb]
\begin{center}
\includegraphics[width=0.8\textwidth]{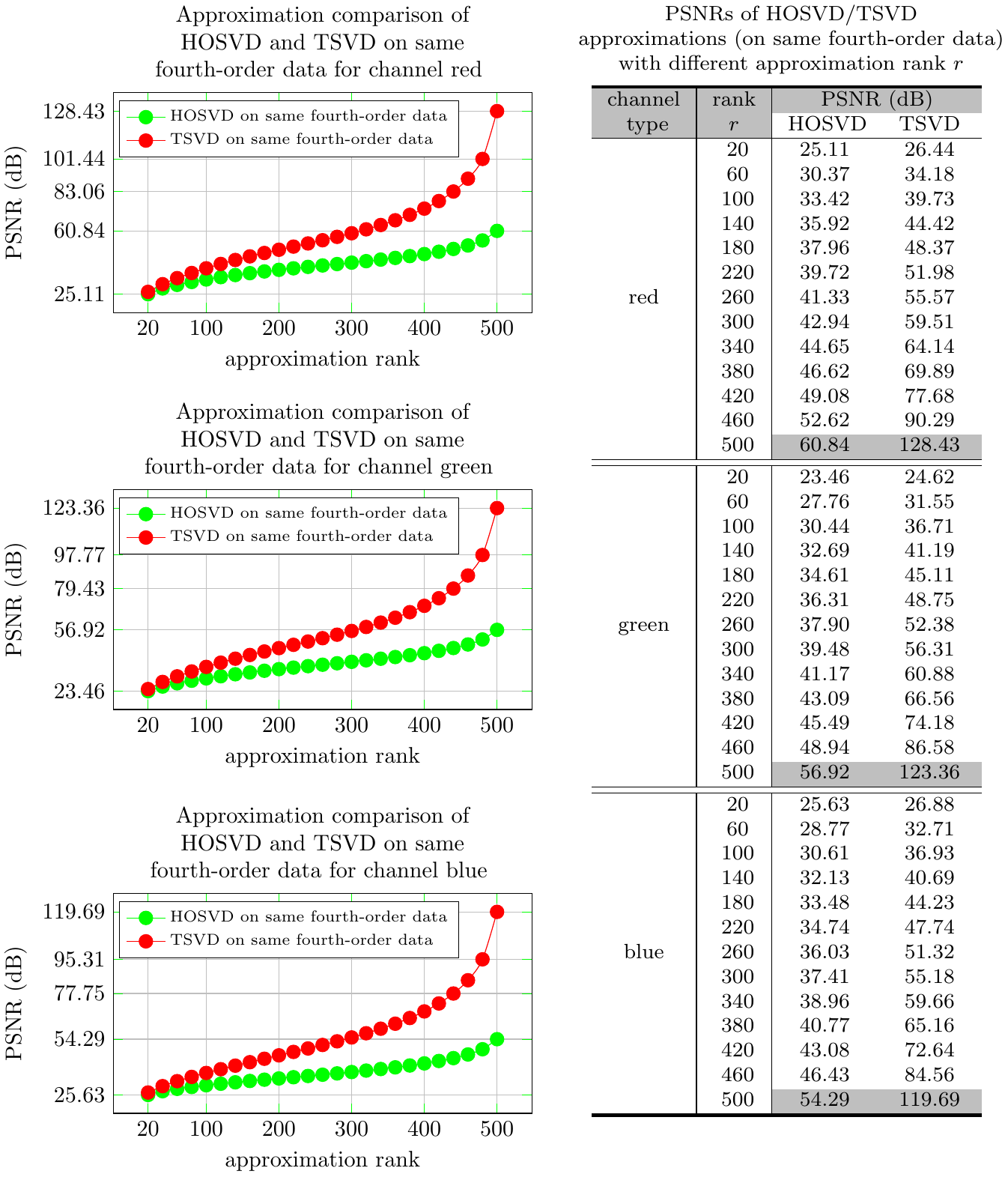}
\end{center}
%\vspace{0.2em}
\caption{A  ``horizontal'' comparison of low-rank approximations by HOSVD and TSVD on
each generalized monochrome Lena image, as an fourth-order real number array in $\mathbb{R}^{3\times 3\times 512\times 512}$.~
First column: PSNR curves, over rank $r$, of HOSVD/TSVD approximations  on each generalized
monochrome Lena image.~
Second column: Some quantitative PSNRs of HOSVD/TSVD approximations with rank $r$.
}
\label{fig:lapproximation_on_each_generalized_monochrome-lena-image}
\end{figure*}

\subsubsection{THOSVD versus HOSVD --- A ``Vertical'' Comparison}
The low-rank approximation performances of THOSVD and HOSVD are compared.
For the HOSVD approximations the RGB Lena image, which is a tensor in $\mathbb{R}^{512\times 512\times 3}$, is used as the test sample. For the THOSVD the $3\times 3$ neighborhood (with zero-padding) strategy is used to tensorize each real number entry of the RGB Lena image. The obtained t-image $X_\mathit{GT}$ is a g-tensor in ${R}^{512\times 512\times 3}$, i.e., an order-five array in $\mathbb{R}^{3\times 3\times 512\times 512\times 3}$.

To give a ``vertical'' comparison,
on obtaining an approximation $\hat{X}_\mathit{GT}$ $\in$ $\mathbb{R}^{3\times 3\times 512\times 512\times 3}$, we compare 
$\hat{X}_\mathit{GT}(i)|_{i = (2, 2)}$ $\in$ $\mathbb{R}^{512\times 512\times 3}$, i.e., the central slice of the THOSVD approximation, with the HOSVD approximation on the RGB Lena image.

Figure \ref{fig:lena-THOSV-HOSVD-approximation} gives
a ``vertical'' comparison of the PSNR maps of THOSVD and HOSVD approximations and the
tabulated PSNRs for some representative multilinear rank tuples $(r_1, r_2, r_3)$.
It shows the PSNR of the THOSVD approximation is consistently higher than the PSNR of the HOSVD approximation. When $(r_1, r_2, r_3) = (500, 500, 3)$, the approximations obtained by THOSVD and HOSVD differ by $30.29$ dB in their PSNR values.

\begin{figure*}[htb]
\begin{center}
\includegraphics[width=0.9\textwidth]{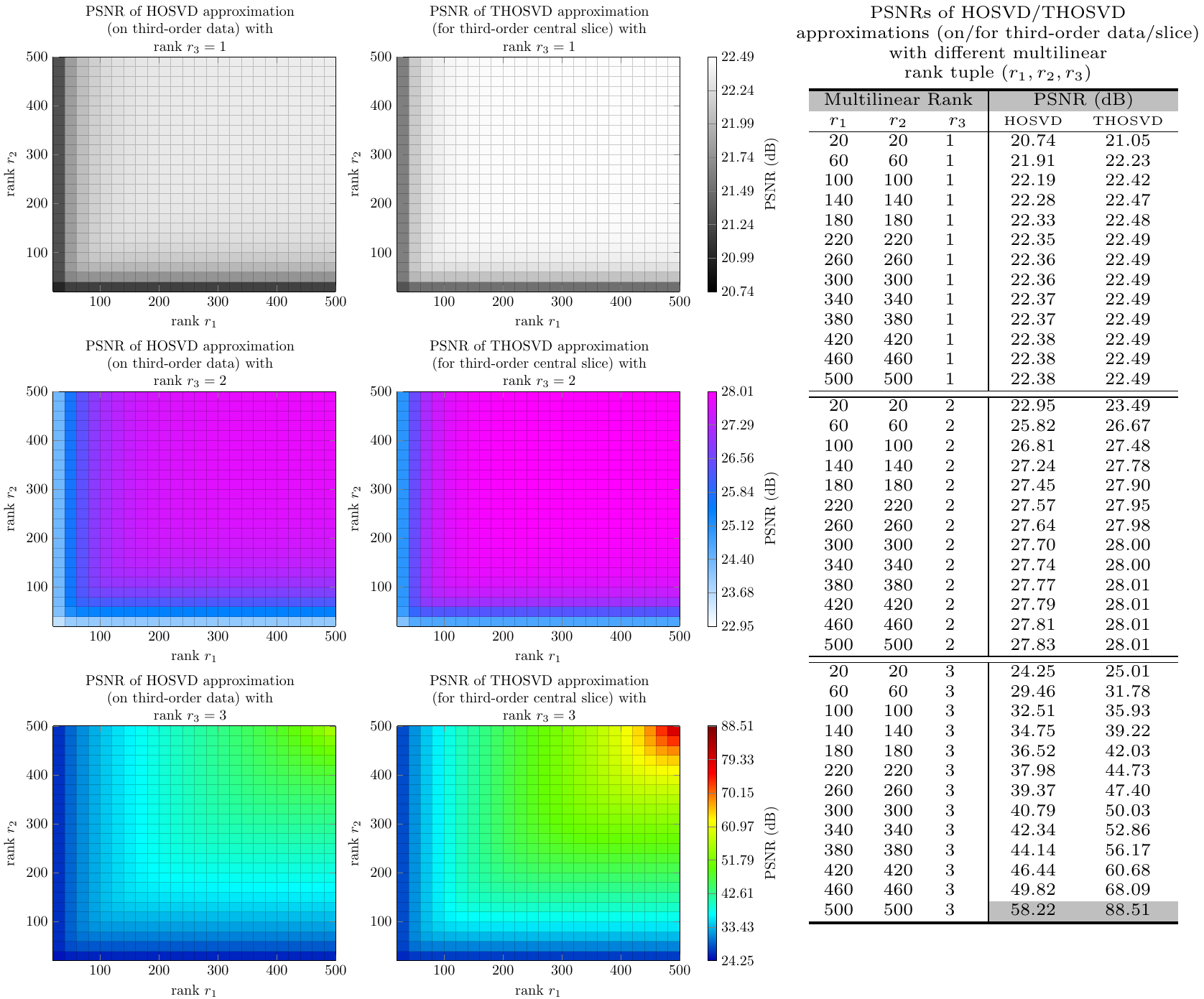}
\end{center}
\caption{A ``vertical'' comparison of THOSVD approximations and HOSVD approximations
with the multilinear rank tuple $(r_1, r_2, r_3)$.~~First column: PSNR maps of HOSVD approximation on the RGB Lena image.
Second column: PSNR maps of THOSVD approximation for the RGB Lena image (i.e., third-order central slice of THOSVD approximation).
Third column: Some quantitative PSNRs of HOSVD/THOSVD approximations with representative multilinear rank tuples.}
\label{fig:lena-THOSV-HOSVD-approximation}
\end{figure*}

\subsubsection{THOSVD versus HOSVD --- A ``Horizontal Comparison''}
Given a fifth-order array $X \in \mathbb{R}^{3\times 3\times 512\times 512\times 5}$
tensorized from the RGB Lena image, which is a third-order array in $\mathbb{R}^{512 \times 512\times  3}$, both THOSVD and HOSVD
can be applied to the same data $X$.

THOSVD takes $X$  as a g-tensor $X_\mathit{GT} \in C^{512\times 512\times 3} \equiv \mathbb{R}^{3 \times 3\times  512\times  512\times  3}$  while HOSVD takes $X$ merely as a canonical fifth-order array in $\mathbb{R}^{3\times 3\times 512\times 512\times 3}$.

Then, given a rank tuple $(r_1, r_2, r_3)$ subject to
$1\le r_1 \le 512$,
$1\le r_2 \le 512$ and
$1\le r_3 \le 3$,
the THOSVD approximation $\hat{X}_\mathit{GT} \in C^{512 \times 512\times  3} $
is computed as in equation (\ref{THOSVD-low-rank-approximation}).

Let the HOSVD of $X \in \mathbb{R}^{3\times 3\times 512\times 512\times 3}$ be
$
X = S ~\times_1~ U_1 ~\times_2~ U_2 ~\times_3~ U_3 ~\times_4~ U_4 ~\times_5~ U_5
$ where $S \in \mathbb{R}^{3\times 3\times 512\times 512\times 3}$ is the core tensor and
$U_1 \in \mathbb{R}^{3\times 3} $,
$U_2 \in \mathbb{R}^{3\times 3} $,
$U_3 \in \mathbb{R}^{512\times 512} $,
$U_4 \in \mathbb{R}^{512\times 512} $,
$U_5 \in \mathbb{R}^{3\times 3} $ are all orthogonal matrices.

Then,
to give a ``horizontal'' comparison with the THOSVD approximation $\hat{X}_\mathit{GT} \in
C^{512\times  512\times 3}$
with a rank tuple $(r_1, r_2, r_3)$, the HOSVD approximation $\hat{X} \in \mathbb{R}^{3\times  3\times  512\times  512\times  3}$ is given by the following multi-mode product
\begin{equation}
\begin{aligned}
\hat{X} = (S)_{:, :, 1:r_1, 1:r_2, 1: r_3} \times_1~ U_1 \times_2~ U_2 \times_3~ (U_3)_{:, 1:r_1} %\\
\times_4~ (U_4)_{:, 1:r_2}
\times_5~ (U_5)_{:, 1:r_3} \;.
\end{aligned}
\end{equation}

Figure \ref{fig:lena-THOSV-HOSVD-approximation-horizontal-comparison} gives the
``horizontal'' comparison of THOSV approximations and HOSVD approximations on the same array
with different rank tuples $(r_1, r_2, r_3)$.
Albeit somewhat smaller in PSNRs,
the results in Figure \ref{fig:lena-THOSV-HOSVD-approximation-horizontal-comparison}
are similar to the results in Figure \ref{fig:lena-THOSV-HOSVD-approximation} (a ``vertical'' comparison), corroborating the claim that a THOSV approximation outperforms, in terms of PSNR, the corresponding HOSV approximation on the same data.

\begin{figure*}[htb]
\begin{center}
\includegraphics[width=0.9\textwidth]{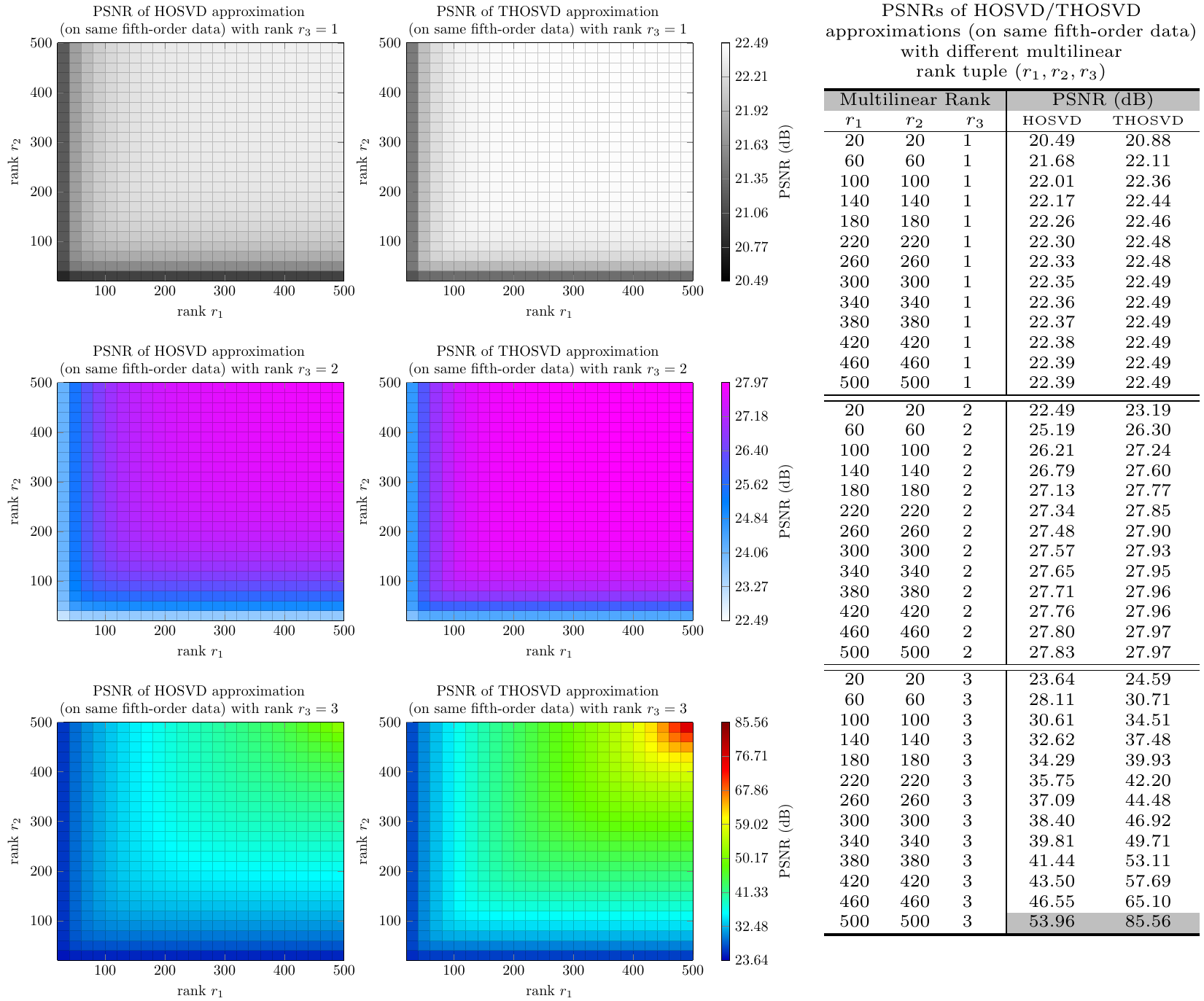}
\end{center}
\caption{
A ``horizontal'' comparison of THOSVD approximations and HOSVD approximations
with multilinear rank tuple $(r_1, r_2, r_3)$.~~
First column: PSNR maps of HOSVD approximation.
Second column: PSNR maps of THOSVD approximation. Third column: Some PSNRs of
HOSVD/THOSVD approximations on same fifth-order data with representative multilinear rank tuples $(r_1, r_2, r_3)$.
}
\label{fig:lena-THOSV-HOSVD-approximation-horizontal-comparison}
\vspace{1em}
\end{figure*}

\subsection{Reconstruction}
\label{section:Reconstruction}

The qualities of the low-rank reconstructions produced by TPCA and PCA and by T2DPCA and 2DPCA, as described  by the equations (\ref{equation:reconstruction-of-TPCA})  and
(\ref{equation:T2DPCA-reconstruction}), are compared.

The effectiveness of
PCA, 2DPCA, TPCA and T2DPCA for reconstruction is assessed
using the ORL dataset.
The data set contains $400$ face images in $40$ classes, i.e., $10$ images/class $\times$ $40$ classes. Each image has
$112\times 92$ pixels\footnote{\url{https://www.cl.cam.ac.uk/research/dtg/attarchive/facedatabase.html}}.
The first $200$ images ($5$ images/class $\times$ $40$ classes) are used as the observed images and the remaining $200$ images are the query images.

For the experiments with TPCA/T2DPCA,
all ORL images are tensorized to t-images in $R^{112\times 92}$, namely,
order-four arrays in $\mathbb{R}^{3\times 3\times 112\times 92 }$.
Eigendecompositions and t-eigendecompositions
are computed on the observed images and t-images, respectively. Reconstructions are computed for the
query images and t-images respectively.
The number of PSNRs for the reconstructed images and t-images is $200$. It is convenient to use the average of the PSNRs (denoted by $A$), the standard deviation of PSNRs (denoted by $S$), and the ratio, $A/S$. A larger value of $A$ with a smaller value of $S$, indicates a better quality of reconstruction.

\subsubsection{TPCA versus PCA --- A ``Vertical'' Comparison}

To make the TPCA and PCA reconstructions computationally  tractable, each image is resized to $56 \times 46$ pixels by bi-cubic interpolation. The resized images are also tensorized to t-images, i.e., order-four arrays in $\mathbb{R}^{3\times 3\times 56\times 46}$.
The obtained images and t-images are then transformed to vectors and t-vectors, respectively, by stacking their columns. The central slices of the TPCA reconstructions are compared with the PCA reconstructions.

Figure \ref{figure:recontruction-PCA-TPCA} shows graphs and some tabulated values of $A$, $S$ and $A/S$ for a number of eigen-vectors and eigen-t-vectors. Note that $K$ linearly independent observed vectors or t-vectors yield at most $(K-1)$ eigen-vectors or eigen-t-vectors. Thus, the maximum number of eigen-vectors and eigen-t-vectors in Figure \ref{figure:recontruction-PCA-TPCA} is $199$ ($K = 200$).

The average PSNR for TPCA is consistently higher than the average PSNR for PCA.
The PSNR standard deviation for TPCA is slightly larger than the PSNR standard deviation for PCA, but the ratio $A/S$ for TPCA is generally smaller than the ratio $A/S$ for PCA. This indicates that TPCA outperforms PCA in terms of reconstruction quality.

\begin{figure*}[htb]
\begin{center}
\includegraphics[width=0.85\textwidth]{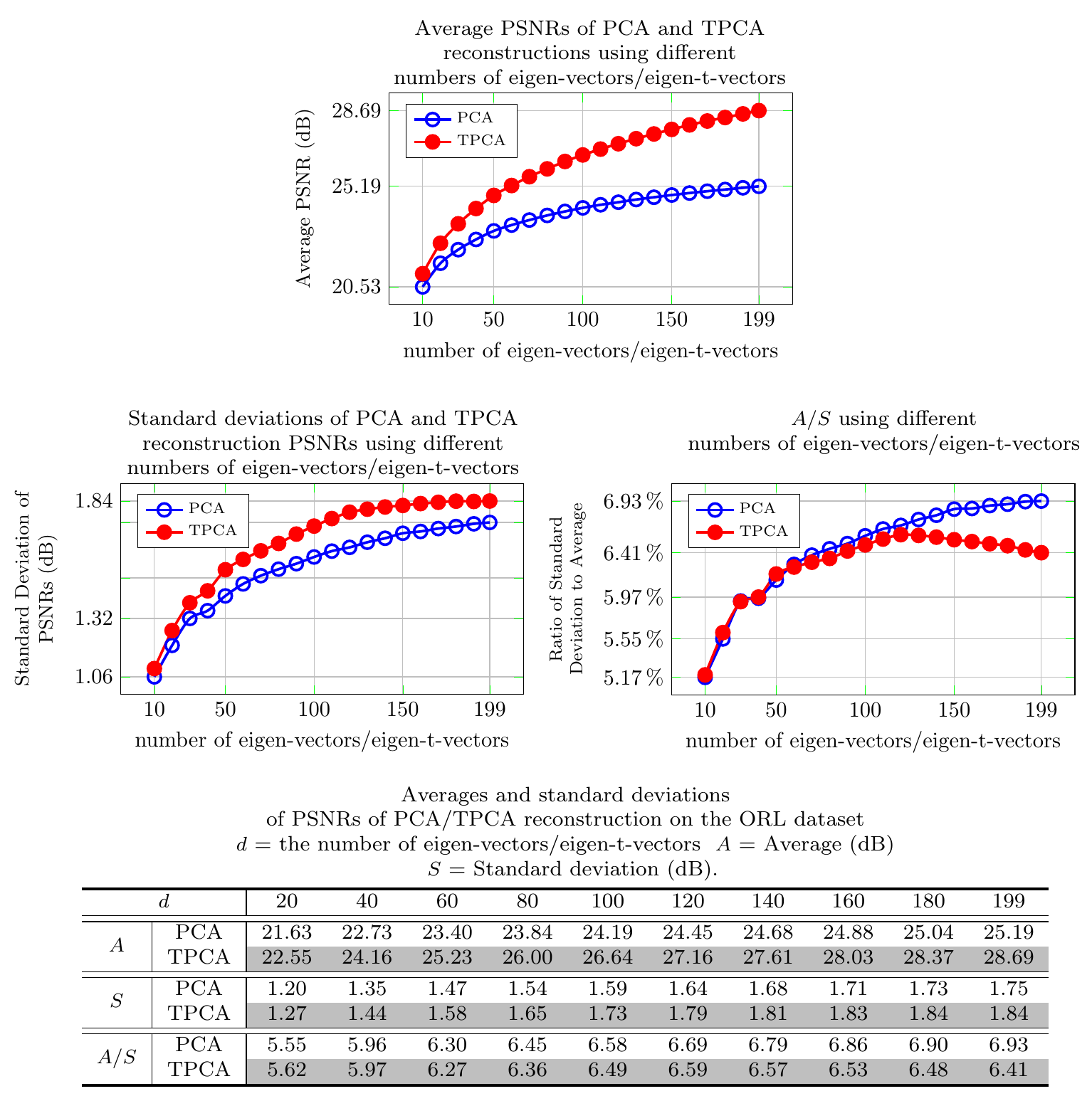}
\end{center}
\caption{A  ``vertical'' comparison of PSNR averages and standard deviations for PCA and TPCA reconstructions.}
\label{figure:recontruction-PCA-TPCA}
%\vspace{3em}
\end{figure*}

\subsubsection{T2DPCA versus 2DPCA --- A ``Vertical'' Comparison}

The same observed samples from the ORL dataset
(the first $200$ images, $5$ images/class $\times$ $40$ classes)
and query samples
(the remaining $200$ images)  are used to compare the reconstruction performances of T2DPCA and 2DPCA. The central slices of the T2DPCA are compared with the 2DPCA reconstructions.

Figure \ref{fig:average-deviation-2DPCA-T2DPCA} shows the reconstruction curves and some tabulated values yielded by T2PCA and 2DPCA as functions of the number $d$ of eigenvectors or eigen-t-vectors. The average PSNR obtained by T2DPCA is consistently higher than the average PSNR obtained by 2DPCA.
When the parameter $d$ equals $111$, the gap between the two average PSNRs is
$31.98$ dBs. Furthermore, the PSNR standard deviation for T2DPCA is also generally smaller than the PSNR standard deviation for 2DPCA. In terms of reconstruction quality, T2DPCA outperforms 2DPCA.

\begin{figure*}[htb]
\begin{center}
\includegraphics[width=0.9\textwidth]{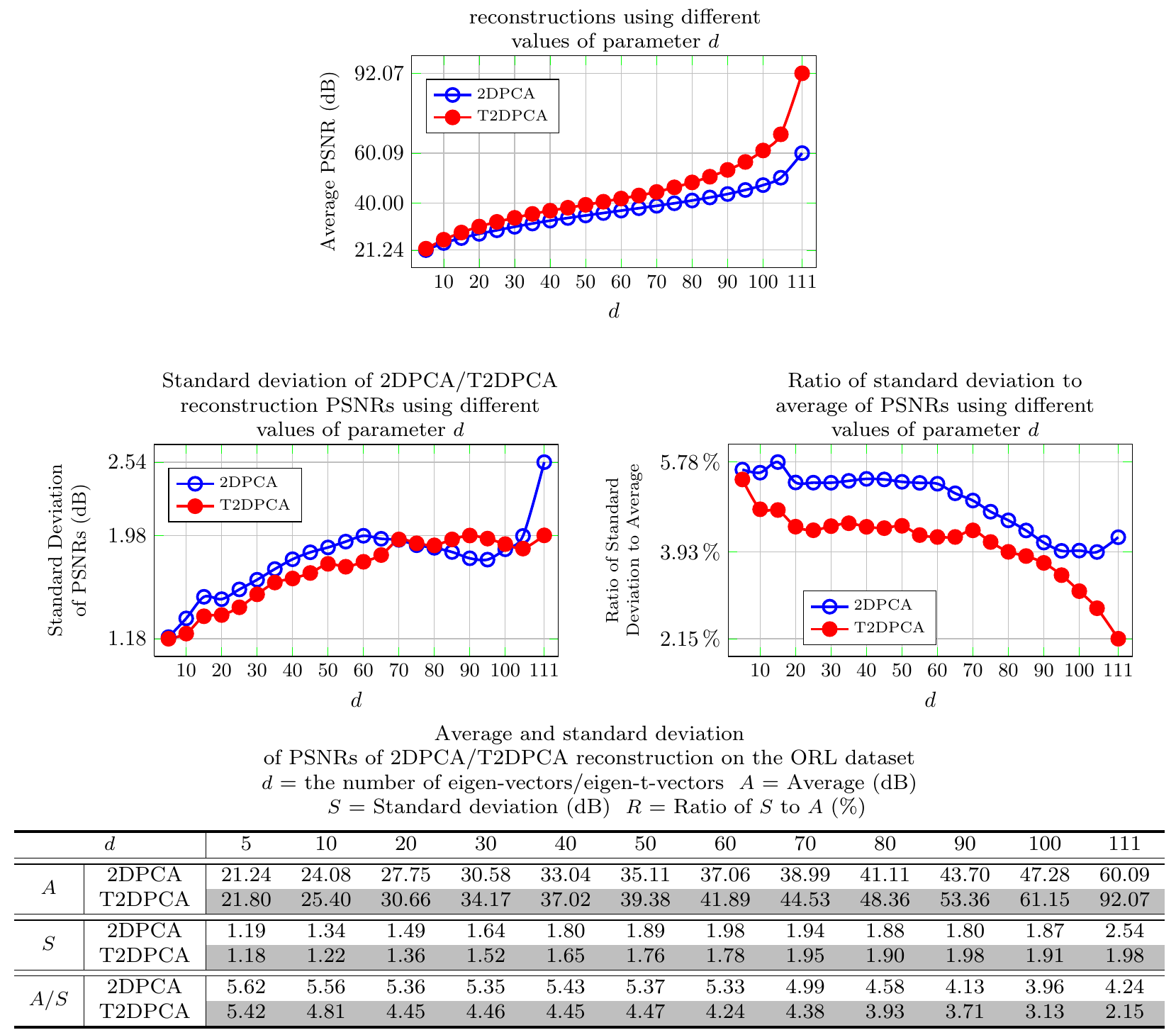}
\end{center}
\caption{A ``vertical'' comparison of PSNR averages and standard deviations for the 2DPCA and T2DPCA reconstructions. }
\label{fig:average-deviation-2DPCA-T2DPCA}
\end{figure*}

\subsection{Classification}
\label{section:Classification}

TGCA and GCA are applied to the classification of the pixel values in hyperspectral images. Hyperspectral images have hundreds of spectral bands, in contrast with RGB images which have only three spectral bands. The multiple spectral bands and high resolution make hyperspectral imagery essential in remote sensing, target analysis, classification and identification \cite{Liu2017Sparse,He2016Tensor-0008,Zhang2016Spectral,Fu2016Hyperspectral,Wei2017Spectral,Ma2016Spectral,Zhang2013Tensor011}.
Two publicly available data sets are used to evaluate the effectiveness of TGCA and GCA for supervised classification.

\subsubsection{Datasets}

The first hyperspectral image dataset is the Indian Pines cube (Indian cube for short),
which consists of $145 \times 145$ hyperspectral pixels (hyperpixels for short)
and has $220$ spectral bands, yielding an array of order-three in
$\mathbb{R}^{145 \times 145 \times 220}$.
The Indian cube comes with ground-truth labels for $16$ classes \cite{Hyperspectrallink}.
The second hyperspectral image dataset is the Pavia University cube (Pavia cube for short), which consists of $610 \times 340$ hyperpixels with $103$ spectral bands, yielding an array of order three in $\mathbb{R}^{610 \times 340 \times 103}$. The ground-truth contains $9$ classes \cite{Hyperspectrallink}.

\subsubsection{Tensorization}

Given a hyperspectral cube, let $D_1$ be he number of rows, $D_2$ the number of columns and
$D$ the number of spectral bands. A hyperpixel is represented by a vector in $\mathbb{R}^{D}$. Each pixel is tensorized by its  $3\times 3$ neighborhood. The tensorized hyperspectral cube is represented by an array in
$\mathbb{R}^{3\times 3\times D_1\times D_2\times D}$. Each tensorized hyperpixel, called t-hyperpixel in this paper, is represented by a t-vector in $R^{D}$, i.e., an array in $\mathbb{R}^{ 3\times 3\times D}$.

Figure \ref{fig:data-tensorization} shows the tensorization of a canonical vector extracted from a hyperspectral cube. The tensorization of all vectors yields a tensorized
hyperspectral cube in $\mathbb{R}^{3\times 3\times D_1\times D_2\times D}$.

\begin{figure}[b]
\centering
    \includegraphics[width=0.46\textwidth]{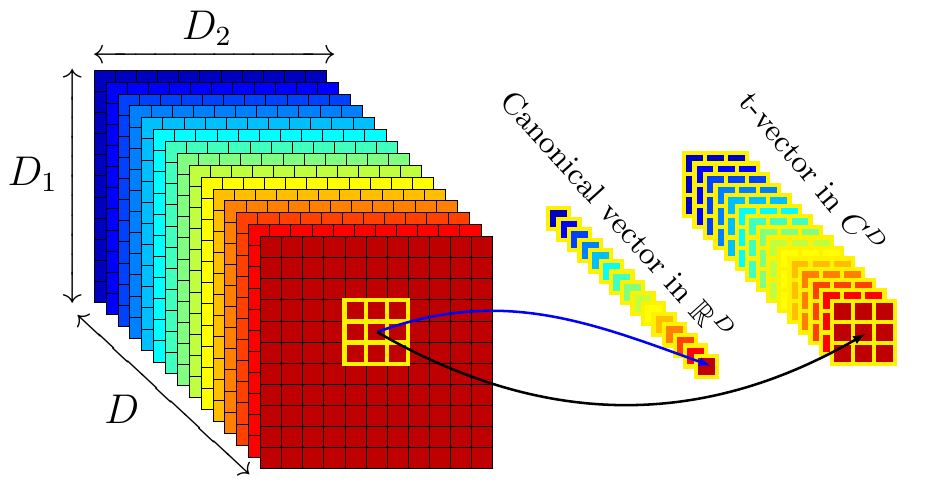}
\vspace{1em}
\caption{Tensorization of a canonical vector extracted from a hyperspectral cube}
\label{fig:data-tensorization}
\end{figure}

\subsubsection{Input Matrices and T-matrices}

To classify a query hyperpixel, it is necessary to extract features from the hyperpixel.
A t-hyperpixel in TGCA is represented by a set of t-vectors in the $5\times 5$ neighborhood of the
t-hyperpixel. These t-vectors are used to construct a t-matrix. A similar construction is used for GCA.

In GCA for example, let the vectors in the $5\times 5$ neighborhood of a hyperpixel be $X_{\mathit{vec}, 1}, \cdots, X_{\mathit{vec}, 25}$. The ordering of the vectors should be the same for all hyperpixels. The raw matrix
$X_\mathit{mat}$ representing the hyperpixel is
given by marshalling these vectors as the columns of $X_\mathit{mat}$, namely $X_\mathit{mat} \doteq [X_{\mathit{vec}, 1}, \cdots, X_{\mathit{vec}, 25}] \in \mathbb{R}^{D\times 25}$.
The associated t-matrix $X_\mathit{TM}  \in C^{D\times 25}$ in TGCA is obtained by marshalling the associated $25$ t-vectors.

After obtaining each matrix and t-matrix, the columns are orthogonalized. The resulting matrices and t-matrices are input samples for GCA and TGCA respectively.

\subsubsection{Classification}

To evaluate GCA, TGCA and the competing methods, the overall accuracies (OA) and
the Cohen's $\kappa$ indices of the supervised classification of hyperpixels (i.e.,
prediction of class labels of hyperpixels) are used. The overall accuracies and $\kappa$ indices are
obtained for different component analysers
and classifiers. Higher values of OA or $\kappa$  indicate a  higher component analyzer performance
\cite{Fitzgerald1994Assessing-kappa-definition}. Let $K$ be the number of query samples, let $K'$ be the number of correctly classified samples. The overall accuracy  is simply defined by
$\mathit{OA} = K^{'} / K $.
The $\kappa$ index is defined by \cite{Cohen1960ACoefficient}
\begin{equation}
\kappa = \frac{K \cdot K^\mathit{'} - \sum\nolimits_{j=1}^{N^\mathit{class}} a_{j}b_{j}}{{K^{2} - \sum\nolimits_{j=1}^{N^\mathit{class}} a_{j}b_{j}}}
\end{equation}
where $N^\mathit{class}$ is the number of classes,
$a_{j}$ is the number of samples belonging to the $j$-th  class and
$b_{j}$ is the number of samples classified to the $j$-th class.

Two classical component analyzers, namely PCA and LDA, and four state-of-the-art component analyzers, namely TDLA \cite{Zhang2013Tensor011}, LTDA \cite{Zhong2015Discriminant017},
GCA \cite{Harandi2015Extrinsic} and TPCA (ours)
are evaluated against
TGCA.
As an evaluation baseline, the results
obtained with the original raw canonical vectors for hyperpixels are given. These raw vectors are
denoted as the ``original'' (ORI for short) vectors.
Three vector-oriented  classifiers, NN (Nearest Neighbor), SVM (Support Vector Machine), and RF (Random Forest), are employed to evaluate the
effectiveness of the features extracted by  these component analyzers.

In the experiments, the background hyperpixels are excluded, because they do not have labels in the
ground-truth. A total of $10\%$ of the foreground hyperpixels
are randomly and uniformly chosen without replacement as the observed
samples (i.e., samples whose class labels are known in advance).
The rest of the foreground hyperpixels
are chosen as the query samples, that is samples with the class labels to be determined.

In order to use
the vector-oriented classifiers NN, SVM and RF,
the t-vector results, generated by TGCA or TPCA, are transformed
by pooling them to yield canonical vectors. For TGCA, the canonical vectors obtained by pooling are referred to as TGCA-I features and the t-vectors without pooling are referred to as the TGCA-II features.

To assess the effectiveness of the TGCA-II features, a generalized classifier which deals with t-vectors is needed. It is possible to generalize many canonical classifiers from vector-oriented to t-vector-oriented, however a comprehensive  discussion of these generalizations is outside the scope of this paper. Nevertheless, it is very straightforward to generalize NN. The $d$-dimensional t-vectors are not only elements of the module $C^{d}$, but also the elements in the
vector space $\mathbb{C}^{3\times 3 \times d}$.
This enables the use of the canonical Frobenius norm
to measure the distance between two t-vectors,
as the elements in $\mathbb{C}^{3\times 3 \times d}$.
The canonical Frobenius norm should not be confused with the generalized Frobenius norm defined in equation (\ref{A7}).

Figure \ref{figure:barplot-classification-accuracies} gives the highest classification accuracies obtained by each pair of component analyser and classifier on the two hyperspectral cubes.
The highest accuracies are obtained by traversing the set of feature dimensions $d \in \{5, 10,\cdots, D_m\}$ where $D_\mathit{m}$ is the maximum dimension valid for the associated component analzser.
Figure \ref{figure:barplot-classification-accuracies},
shows that
the results obtained by the algorithms TPCA, TGCA-I and TGCA-II, are consistently
better than those obtained by their canonical counterparts.
Even working with a relatively weak classifier NN, TGCA achieves the highest accuracies and highest $\kappa$ indices in the experiments. Further results are shown in Figures \ref{fig:figure005-result-indian} and \ref{fig:figure005-result-Pavia}.
It is clear that the pair TGCA and NN
yield the best results, outperforming any other pair of analyzer and classifier.

\begin{figure*}[htb]

\centering
\includegraphics[width=0.9\textwidth]{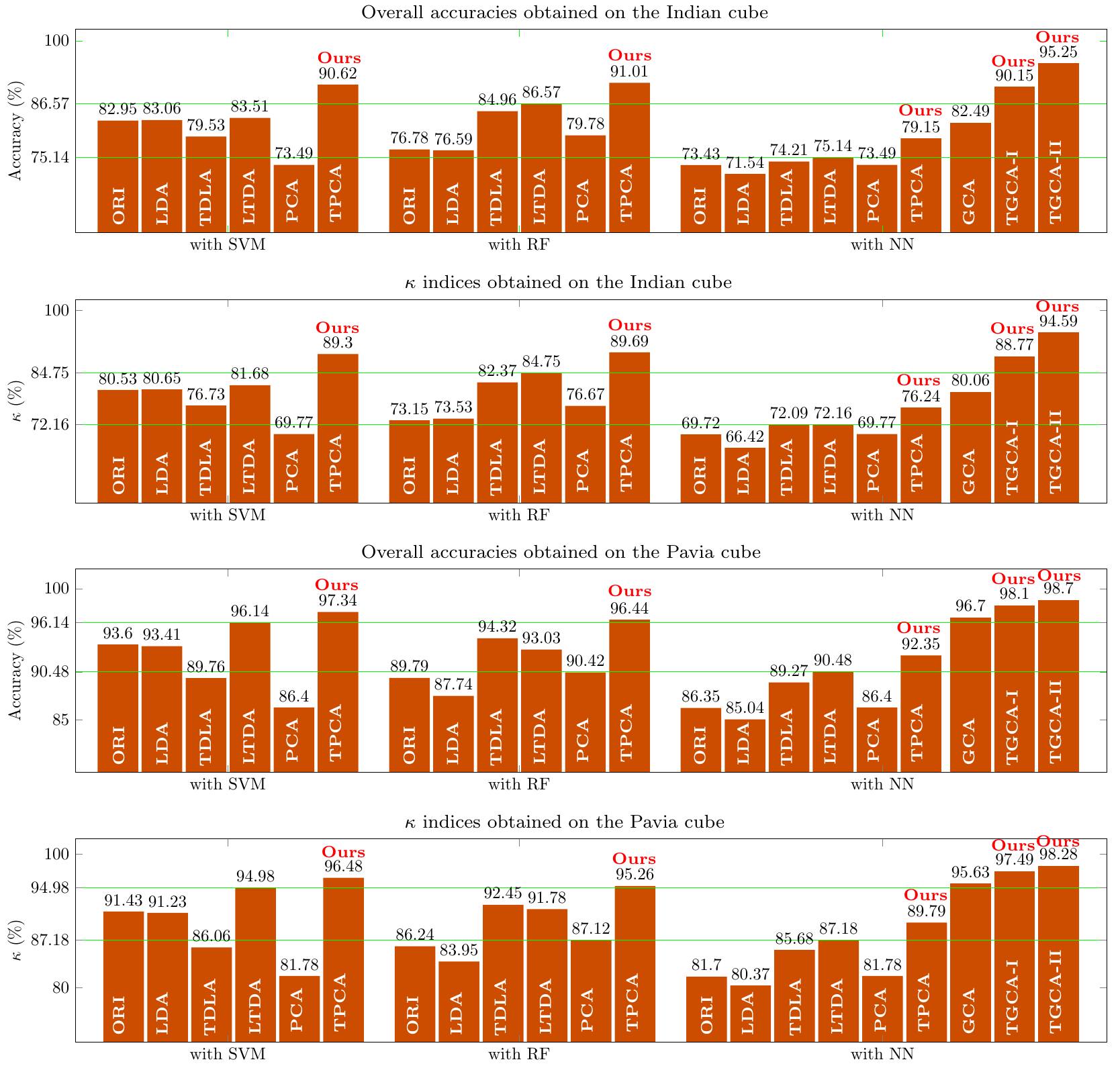}

\caption{Classification accuracies obtained on two hyperspectral cubes}
\label{figure:barplot-classification-accuracies}
\end{figure*}

\begin{figure*}[htb]

\centering
\includegraphics[width=0.9\textwidth]{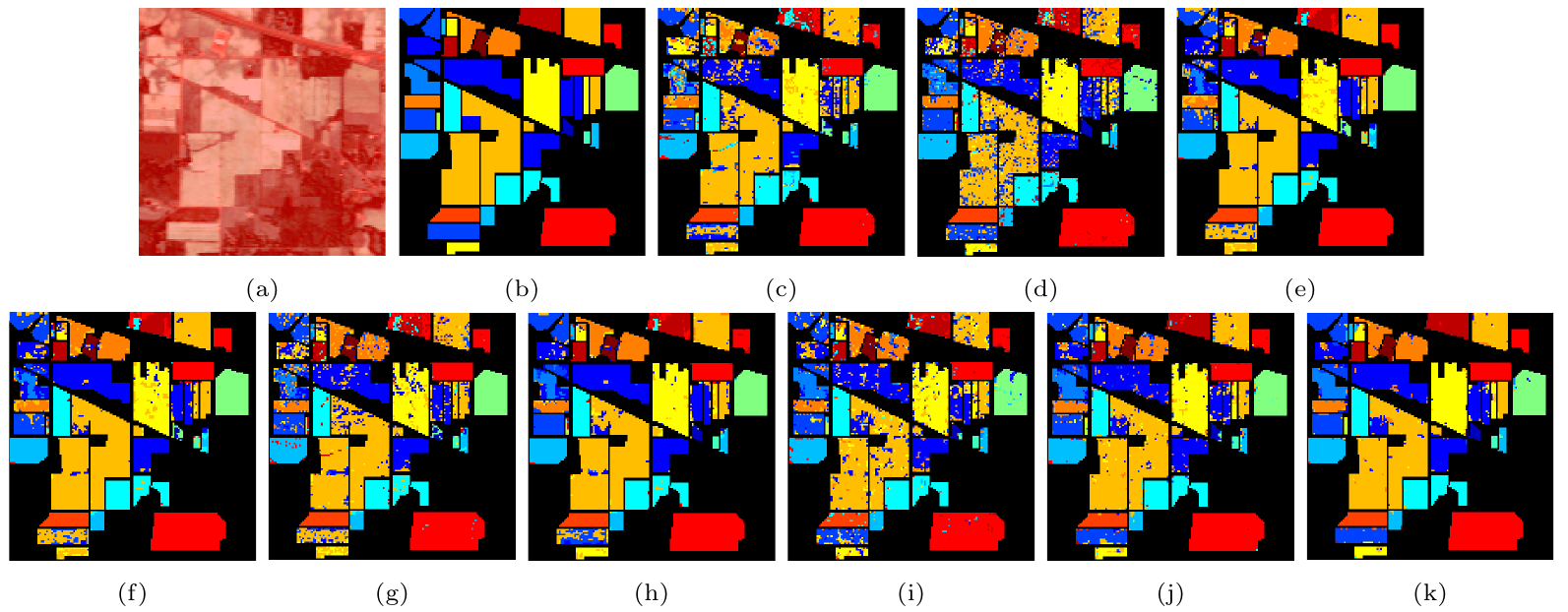}
\caption{Some visual results obtained on the Indian cube.
  ~(a) Pseudo-coloured 2D scene of Indian Pines
  ~(b) class ground-truth of hyperpixels
  ~(c) ORI with RF
  ~(d) LDA with RF
  ~(e) TDLA with RF
  ~(f) LTDA with RF
  ~(g) PCA with RF
  ~(h) TPCA with RF
  ~(i) GCA with NN
  ~(j) TGCA-I with NN
  ~(k) TGCA-II with NN}
  \label{fig:figure005-result-indian}

\end{figure*}

\begin{figure*}[htb]

\centering
\includegraphics[width=0.9\textwidth]{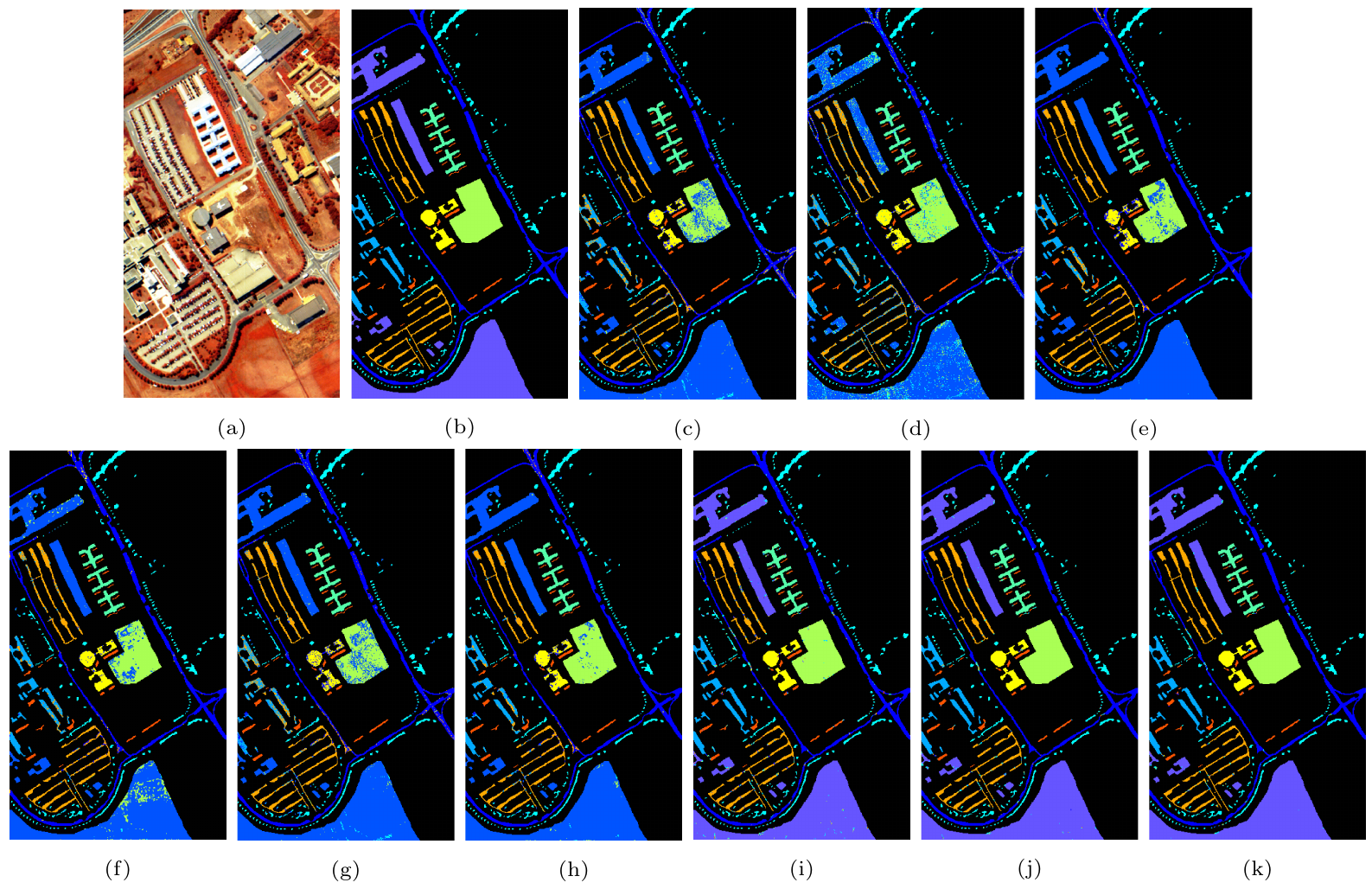}
\caption{Some visual results obtained on the Pavia cube.
  (a) Pseudo-colored 2D scene of the Pavia University
  ~(b) class ground-truth of hyperpixels
  ~(c) ORI with  RF
  ~(d) LDA with RF
  ~(e) TDLA with RF
  ~(f) LTDA with RF
  ~(g) PCA  with RF
  ~(h) TPCA  with RF
  ~(i) GCA  with NN
  ~(j) TGCA-I  with NN
  ~(k) TGCA-II  with NN}
  \label{fig:figure005-result-Pavia}
\end{figure*}

\subsubsection{TGCA versus GCA}

It is noted that the maximum dimension of the TGCA and GCA features is
equal to the number of observed training samples, and therefore is much higher than
the original dimension, which is equal to
the number of spectral bands.
Thus, taking the original dimension as the baseline,
one can employ TGCA or GCA either for dimension reduction or dimension increase.
When the so-called ``curse of dimension'' is the concern, one can
discard the insignificant entries of the TGCA and GCA features. When the accuracy is the primary concern, one can use higher dimensional features.

\begin{figure*}[htb]
\centering
\includegraphics[width=0.9\textwidth]{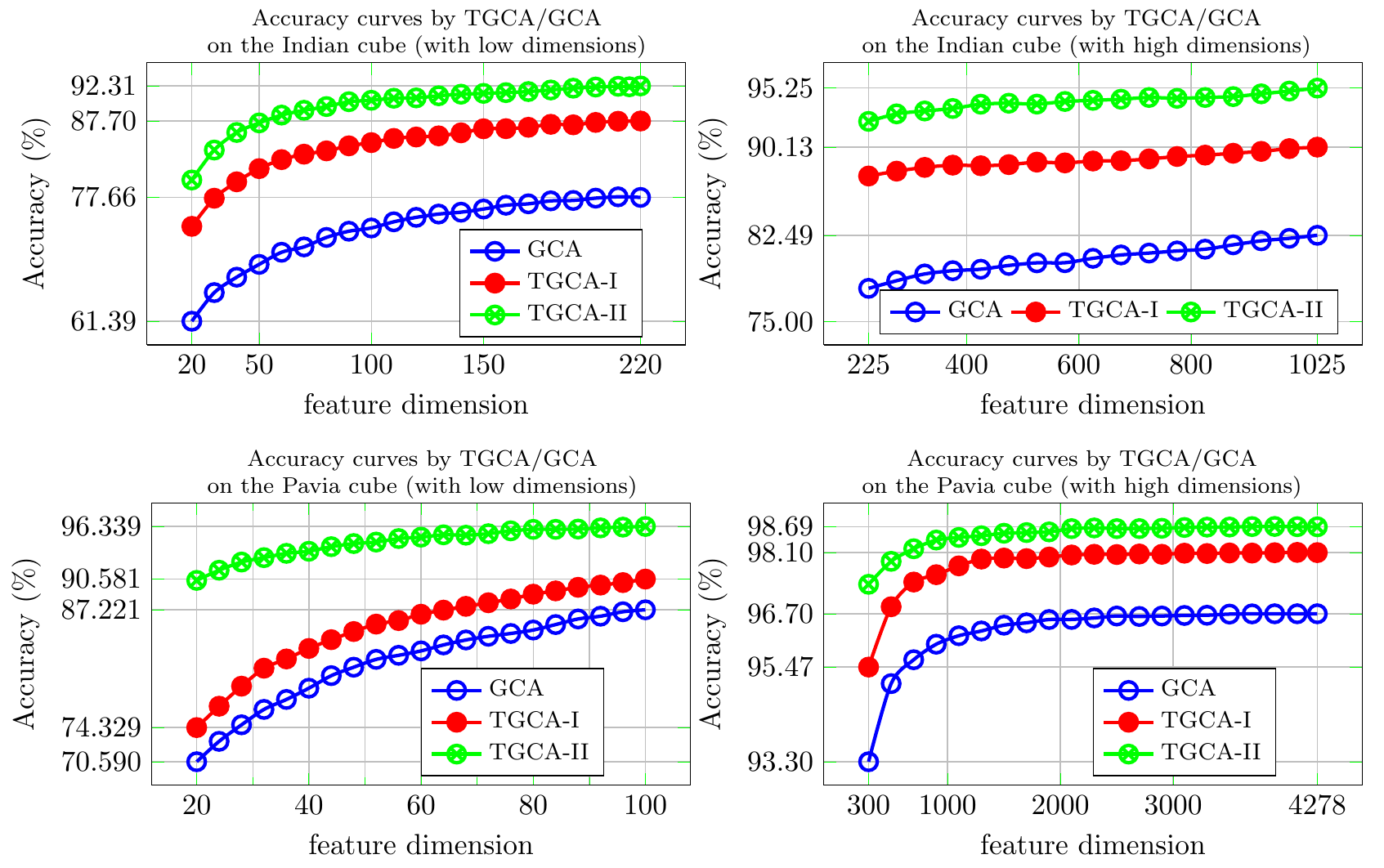}
\caption{Accuracy curves obtained by  TGCA/GCA (with NN) on the Indian/Pavia cube}
\label{fig:accuracy-comparison-GCA-TGCA}
\end{figure*}

The performances of TGCA and GCA for varying feature dimension are compared using accuracy curves
generated by TGCA (ie., TGCA-I and TGCA-II) and GCA, as shown in Figure \ref{fig:accuracy-comparison-GCA-TGCA}.
The results are obtained for low feature dimensions and for high feature dimensions.
It is clear that the classification accuracies obtained using TGCA and TGCAII are consistently higher
than the accuracies obtained using GCA.

\subsubsection{TPCA versus PCA}

The classification accuracies of TPCA and PCA are compared, although the highest classification accuracies are not obtained from TPCA or PCA. The classification accuracy curves obtained by TPCA and PCA  (with classifiers NN, SVM and RF) are given in
Figure \ref{fig:accuracy-comparison-TPCA-PCA-classifiers}. It is clear that, no matter which classifier and feature dimension are chosen, the accuracy using TPCA
is consistently higher than the accuracy using PCA.\footnote{To use the same classifiers, pooling is used to transform the t-vectors by TPCA to canonical vectors.}

\begin{figure*}[htb]
\centering
\includegraphics[width=0.9\textwidth]{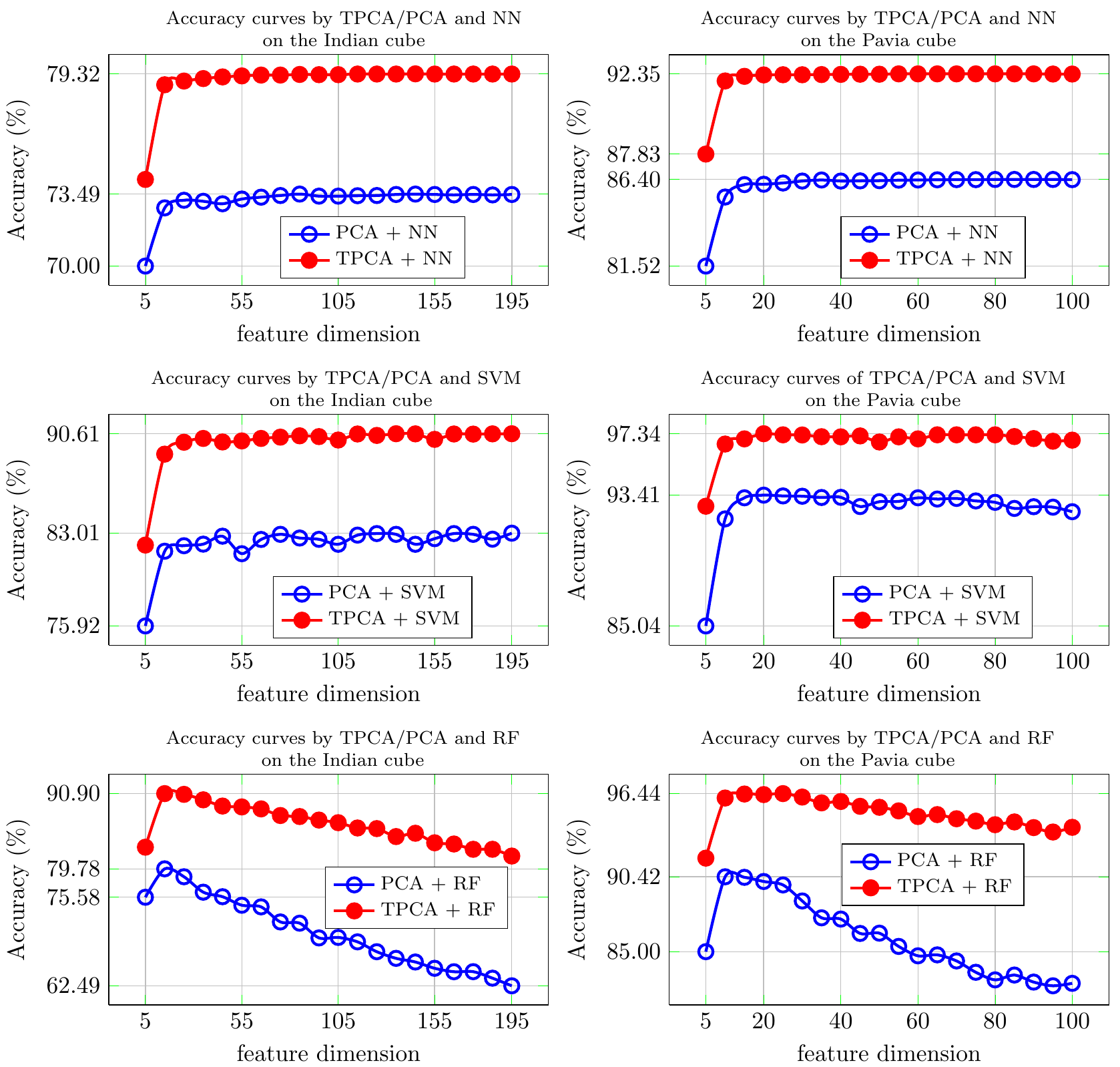}

\caption{Accuracy curves obtained by  TPCA/PCA and different classifiers on the Indian/Pavia cube. First column: results on the Indian cube. Second column: results on the Pavia cube. }
\label{fig:accuracy-comparison-TPCA-PCA-classifiers}
\end{figure*}

\subsection{Computational Cost}

The run times of t-matrix manipulations with different t-scalar sizes $I_1 \times I_2$ are given in Figure \ref{figure:run-time}. The size of t-scalars ranges from $1 \le I_1, I_2\le  32$.
The evaluated t-matrix manipulations include addition, conjugate transposition, multiplication and TSVD. The run time is evaluated using MATLAB R2018B on a notebook PC with Intel i7-4700MQ CPU at 2.40GHz and 16G GB memory.

Each time point in the figure is obtained by averaging $100$ manipulations on random t-matrices in $\mathbb{R}^{I_1\times I_2\times 64 \times 64}$.
Each t-matrix with $(I_1, I_2) \neq (1, 1)$  is transformed to the Fourier domain and manipulated via its $I_1\cdot I_2$ slices. The results are transferred back to the original domain by the inverse Fourier transform.
Note that when $(I_1, I_2) = (1, 1)$, a t-matrix manipulation is reduced to canonical matrix manipulation. The reported run time of a canonical matrix manipulation does not includes the time spent on the Fourier transform and its inverse transform.

From Figure \ref{figure:run-time}, it can be seen that the run time is essentially an increasing linear function of the number of slices, i.e., $I_1\cdot I_2$ .

\begin{figure*}[htb]
\centering
\includegraphics[width=0.85\textwidth]{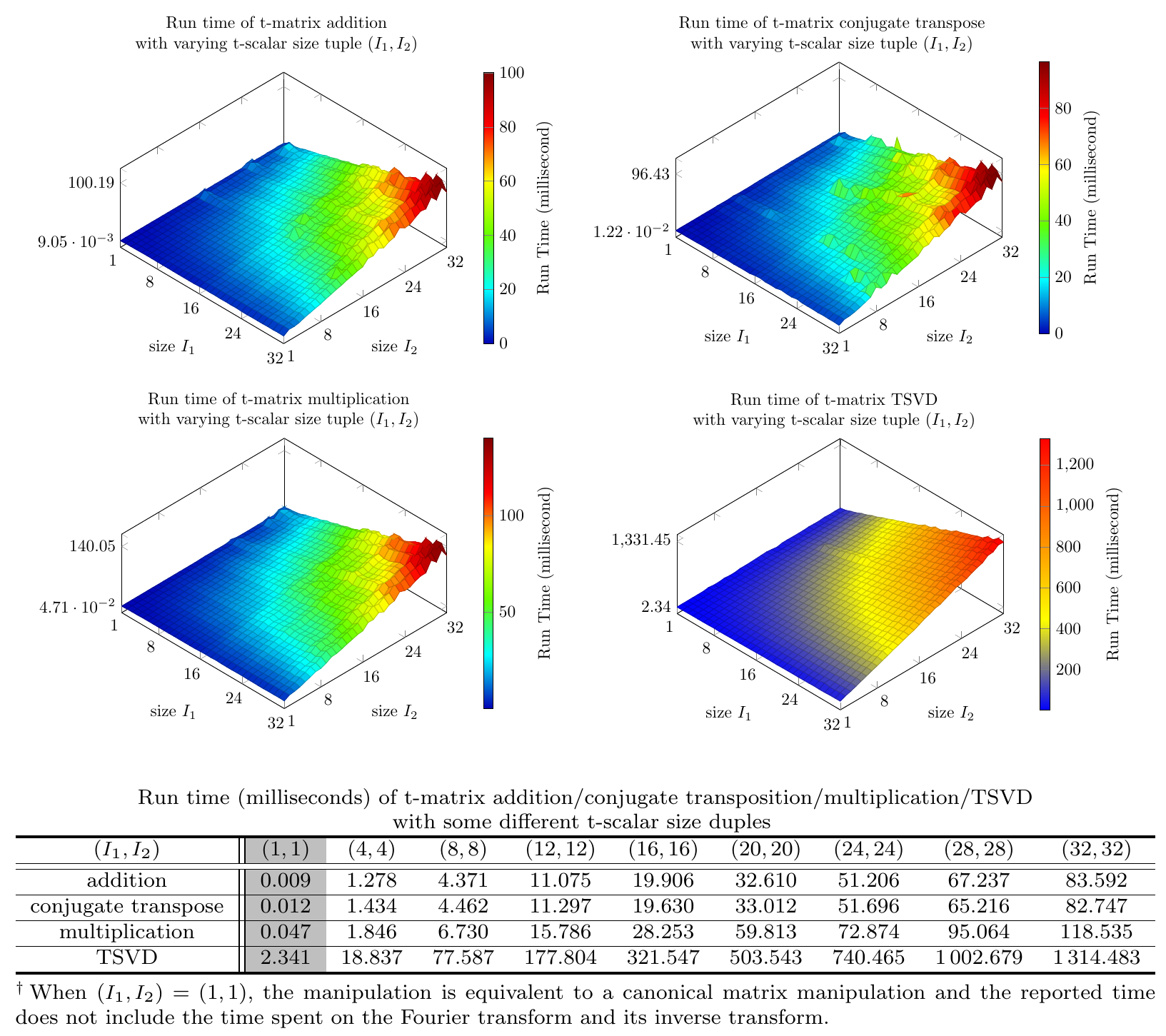}
\caption{Run time of some t-matrix manipulations with different t-scalar sizes}
\label{figure:run-time}
\end{figure*}

\section{Conclusion}
\label{section:conclusions}

An algebraic framework of tensorial matrices is proposed for generalized visual information analysis.
The algebraic framework generalizes the canonical matrix algebra,
combining the ``multi-way'' merits of high-order arrays
and the ``two-way'' intuition of matrices. In the algebraic framework, scalars are extended to t-scalars,
which are implemented as high-order numerical arrays of a fixed-size.
With appropriate operations,
the t-scalars are trinitarian in the following sense. (1) T-scalars are generalized complex numbers.
(2) T-scalars are elements of an algebraic ring. (3) T-scalars are elements of a linear space.

Tensorial matrices, called t-matrices, are constructed with t-scalar elements.
The resulting t-matrix algebra is backward compatible with the canonical matrix algebra.
Using this t-algebra framework, it is possible to generalize many canonical matrix and vector constructions and algorithms.

To demonstrate the ``multi-way'' merits and ``two-way'' matrix intuition of the proposed tensorial algebra and its applications to generalized visual information analysis, the canonical matrix algorithms SVD, HOSVD, PCA, 2DPCA and GCA are generalized.
Experiments with low-rank approximation, reconstruction, and supervised classification
show that the generalized algorithms compare favorably with their canonical counterparts on visual information analysis.

\section*{acknowledgements}
Liang Liao would like to thank professor Pinzhi Fan (Southwestern Jiaotong University, China) for his support and some insightful suggestions to this work. Liang Liao also would like to thank Yuemei Ren, Chengkai Yang, Haichang Ye,
Jie Yang and Xuechun Zhang for their supports to some early stage experiments of this work.

All prospective supports and collaborations to this research are welcome. Contact email: liaolangis@126.com or liaoliang2018@gmail.com.

\section*{Open Source}

A MATLAB repository on the t-algebra, t-vectors, and t-matrices is open-sourced at 
www.github/liaoliang2020/talgebra. Interested readers are referred to this URL for more details.

\clearpage
\section*{Appendix I}

Before giving a proof of the equivalence of equations (\ref{equation:TSVD-low-rank-approximation111}) and (\ref{equation:generalized-Eckart-theorem}), namely, the generalized
Eckart-Young-Mirsky theorem, some
notations need to be defined.

First, $\operatorname{rank}(\cdot)$ denotes the rank of a t-matrix, which generalizes the rank of a canonical matrix and is defined as follows.

\vspace{0.5em}
\noindent
\textit{{Definition I, rank of a t-matrix}}. ~Given a t-matrix,
the rank $Y_{T} \doteq \operatorname{rank}(X_\mathit{TM})$ is a nonnegative t-scalar  such that
\begin{equation}
F(Y_{T})_i = \operatorname{rank}(F(X_\mathit{TM})(i)) \ge 0 \;,\;\;  1\le i \le I\;\;.
\label{equation:generalized-rank}
\end{equation}
where
$F(X_\mathit{TM})(i)$ denotes the $i$-th slice of the Fourier transform $F(X_\mathit{TM})$.

\vspace{0.5em}
\noindent
\textit{{Definition II, partial ordering of nonnegative t-scalars}}.
~Given two nonnegative
t-scalars $X_{T}$
and $Y_{T}$, the notation $X_T \le Y_T$ is equivalent to the following condition
\begin{equation}
F(X_T)_i \le F(Y_T)_i \;,\;\; 1 \le i\le I \;\;.
\label{equation:partial-ordering}
\end{equation}

\vspace{0.5em}
\noindent
\textit{Definition III, minimization of nonnegative t-scalar variable}.
~For a nonnegative t-scalar variable $X_T$
varying in a subset of $ S^\mathit{nonneg}$
, $Y_T \doteq \min(X_T)$ is the  nonnegative t-scalar
infimum of the subset, satisfying the following condition.
\begin{equation}
F(Y_T)_{ i} = \min\left(F{(X_T)}_{i} \right) \ge 0\;,\; 1 \le i\le I
\end{equation}
where $F(Y_T)$ and $F(X_T)$ respectively denote the Fourier transforms of $Y_T$ and
$X_T$.

Given two nonnegative t-scalars $X_{T}$ and $Y_{T}$, let $M_{T}$ be the nonnegative t-scalar defined by $M_{T} = \min(X_{T}, Y_{T})$, namely
\begin{equation}
F({M}_T)_i = \min(F({X}_T)_i, F({Y}_T)_i) \ge 0\;\;,\; i \le i \le I .
\end{equation}

The above definitions are not casual ones.
Following the above definitions, it is not difficult to verify that many
generalized rank
properties hold in the analogous form of their canonical counterparts.

For examples, given any t-matrices
$
X_\mathit{TM} \in C^{D_1\times D_2}$ and
$
Y_\mathit{TM} \in C^{D_2\times D_3}$, the following inequalities hold.

\begin{equation}
\begin{aligned}
Z_T \le \operatorname{rank}(X_\mathit{TM}) \le \min(D_1, D_2) \cdot E_T\;\;.
\end{aligned}
\end{equation}

\begin{equation}
\begin{aligned}
Z_T &\le \operatorname{rank}(X_\mathit{TM} + Y_\mathit{TM}) \\
&\le \operatorname{rank} (X_\mathit{TM})
+ \operatorname{rank} (Y_\mathit{TM}) \;\;.
\end{aligned}
\end{equation}

\begin{equation}
\begin{aligned}
&~~~~\operatorname{rank}(X_\mathit{TM}) + \operatorname{rank}(Y_\mathit{TM}) -D_2 \cdot E_T \\
&\le \operatorname{rank} (X_\mathit{TM} \circ Y_\mathit{TM}) \\
&\le
\min\big( \operatorname{rank}(X_\mathit{TM}), \operatorname{rank}(Y_\mathit{TM}) \big) \\
\end{aligned}
\end{equation}

Since a t-scalar is a t-matrix of one row and one column,
the rank of a t-scalar can be obtained.

Given any t-scalar  $X_T$, let $G_T \doteq \operatorname{rank}(X_T)$
be the rank of $X_T$.
Then, following equation (\ref{equation:generalized-rank}), it is not difficult to prove
that the $i$-th entry of the Fourier transform $F(G_T)$ is given as follows.
\begin{equation}
F(G_T)_i = \left\{
\begin{aligned}
1,  & \text{~~~if~} X_{T, i} \neq 0\\
0,  & \text{~~~otherwise~}
\end{aligned}
\right .
\;\;\;,\;\;  1\le i \le I \;.
\end{equation}

Following the partial ordering given as in
(\ref{equation:partial-ordering})
and equation (\ref{equation:generalized-rank}), it is not difficult to prove that the following propositions hold.
\begin{equation}
\begin{aligned}
&Z_T \le \operatorname{rank}(X_T) \le E_T\;,  \text{for all t-scalars~}  \\
&Z_T = \operatorname{rank}(X_T) \text{~iff~} X_T = Z_T \\
&E_T = \operatorname{rank}(X_T) \text{~iff~} X_T \text{~is invertible}.
\end{aligned}
\label{equation:propositions}
\end{equation}

It follows from (\ref{equation:propositions}) that
$Z_T < \operatorname{rank}(X_T) < E_T $ iff
the t-scalar $X_T$ is non-zero and non-invertible.\footnote{ The partial order ``$<$'' is defined between nonnegative
t-scalars.
The inequality
$Z_T < \operatorname{rank}(X_T) <E_T$ means
$Z_T \le \operatorname{rank}(X_T) \le E_T$ and
$\operatorname{rank}(X_T) \neq Z_T$ and
$\operatorname{rank}(X_T) \neq  E_T$.   }

\vspace{1em}
\noindent
\textit{Generalized rank from a TSVD perspective}.
Given any t-matrix $X_\mathit{TM} = U_\mathit{TM} \circ S_\mathit{TM} \circ V_\mathit{TM}^{\myhtrans} $ where
$S_\mathit{TM} \doteq
\operatorname{diag}(\lambda_{T, 1}, \cdots, \lambda_{T, k}, \cdots \lambda_{T, Q} )$ and $\lambda_{T, k} \in C$ is a t-scalar for all $k$,
then the following equation holds and generalizes its canonical counterpart.
\begin{equation}
\begin{matrix}
Z_T \le \operatorname{rank}(X_\mathit{TM}) \equiv \sum\nolimits_{k=1}^{Q} \operatorname{rank} (\lambda_{T, k}) \le  Q \cdot E_T  \;\;.
\end{matrix}
\end{equation}

Let the approximation be $
\hat{X}_\mathit{TM} = U_\mathit{TM} \myconv \hat{S}_\mathit{TM} \myconv V_\mathit{TM}^{\myhtrans}
$ where $\hat{S}_\mathit{TM} = \operatorname{diag}(\lambda_{T, 1}, \cdots,\lambda_{T, r},
\underset{Q-r}{\underbrace{Z_{T}, \cdots, Z_{T}}} )$.
Then,
$\hat{X}_\mathit{TM}$ is a low-rank approximation to $X_\mathit{TM}$ since
the following rank inequality holds.
\begin{equation}
\begin{matrix}
\operatorname{rank}(\hat{X}_\mathit{TM}) \equiv \sum\nolimits_{k=1}^{r} \operatorname{rank}(\lambda_{T, k})
\le \operatorname{rank}(X_\mathit{TM}) \;.
\end{matrix}
\end{equation}

\vspace{0.5em}
Furthermore,
it is not difficult to verify that equation (\ref{equation:generalized-Eckart-theorem}) is equivalent to
the following equation in the form of canonical matrices (i.e., slices of Fourier transformed t-matrices).
\begin{equation}
\begin{aligned}
\tilde{X}_\mathit{TM}^\mathit{approx}(i) = \mathop{\operatorname{argmin}}\nolimits_{Y_\mathit{mat} \in \mathbb{C}^{D_1\times D_2}} \|\tilde{X}_\mathit{TM}(i) - Y_\mathit{mat}\|_F \\
\text{subject to} \operatorname{rank}(Y_\mathit{mat}) \le r \;, \; 1\le i \le I
\end{aligned}
\label{equation:minimization}
\end{equation}
where
$\tilde{X}_\mathit{TM}^\mathit{approx}$ and
$\tilde{X}_\mathit{TM}$
respectively denote the Fourier transform of
${X}_\mathit{TM}^\mathit{approx}$ and
${X}_\mathit{TM}$ in equation (\ref{equation:generalized-Eckart-theorem}), and
$\operatorname{rank}(Y_\mathit{mat})$ is the rank of a complex matrix
$Y_\mathit{mat}$ in $\mathbb{C}^{D_1\times D_2}$.

\vspace{0.5em}
On the other hand,
by applying the Fourier transforms to both sides of
equation (\ref{equation:TSVD-low-rank-approximation111}),
equation (\ref{equation:TSVD-low-rank-approximation111}) is transformed to the following equation in the form of canonical matrices (i.e., slices of Fourier transformed t-matrices).
\begin{equation}
\tilde{X}_\mathit{TM}^\mathit{svd}(i) = \tilde{U}_\mathit{TM}(i) \cdot \tilde{S}^\mathit{approx}_\mathit{TM}(i) \cdot \left(\tilde{V}_\mathit{TM}(i) \right)^{H},\;
1 \le i \le I
\label{equation:TSVD-transformed}
\end{equation}
where
$\tilde{X}_\mathit{TM}^\mathit{svd} $,
$\tilde{U}_\mathit{TM}$,
$\tilde{S}_\mathit{TM}^{approx}$ and
$\tilde{V}_\mathit{TM}^{approx}$ respectively denote
the Fourier transforms of
$\hat{X}_\mathit{TM} $,
$U_\mathit{TM} $,
$\hat{S}_\mathit{TM} $ and
$V_\mathit{TM} $ in
equation (\ref{equation:TSVD-low-rank-approximation111}).

The canonical Eckart-Young-Mirsky theorem guarantees the equivalence of equations (\ref{equation:minimization}) and
(\ref{equation:TSVD-transformed}).

\clearpage

\section*{Appendix II}

This appendix contains
an equivalent definition of the Fourier transform and its inverse transform of a multi-way array, and
numerical examples to illustrate some of the definitions in the article.

\vspace{0.5em}
1. Given a multi-way array $X \in \mathbb{C}^{I_1 \times \cdots \times I_N}$, the {order-$N$} Fourier
transform of $X$ is given by the following multi-mode tensor multiplication, which is equivalent to the Fourier transform definition given in Section \ref{FourierTransformOfAT-Scalar}.
\begin{equation}
\tilde{X} \doteq F(X) \doteq X \times_1 W_\mathit{mat}^{(I_1)} \cdots
\times_k W_\mathit{mat}^{(I_k)}  \cdots
\times_N W_\mathit{mat}^{(I_N)}
%\label{equation:Fourier-transform}
\nonumber
\end{equation}
where $W_\mathit{mat}^{(I_k)}$ is the Fourier matrix of size $(I_k\times I_k)$.
The $(k_1, k_2)$-th entry of $W_\mathit{mat}^{(I_k)}$ is defined as the following complex number
\begin{equation}
\begin{aligned}
\left(W_\mathit{mat}^{(I_k)}\right)_{k_1, k_2} \doteq
%\frac{1}{\sqrt{I_k}}
%I_k^{-\frac{1}{2}} \cdot
e^{{2\pi \sqrt{-1}\cdot (k_1 - 1)\cdot(k_2 - 1)}\cdot {I_k^{-1}}  }
\end{aligned}
\nonumber
\end{equation}
for all $1\le k_1, k_2 \le I_k$.

\vspace{0.5em}
The inverse Fourier transform is also defined using multi-mode tensor multiplication
\[
X \doteq \tilde{X} \times_1 \left(W_\mathit{mat}^{(I_1)} \right)^{-1} \cdots
\times_k \left( W_\mathit{mat}^{(I_k)} \right)^{-1}  \cdots
\times_N \left(W_\mathit{mat}^{(I_N)} \right)^{-1}
\]
where $\left( W_\mathit{mat}^{(I_k)} \right)^{-1}$ is the the inverse matrix of $W_\mathit{mat}^{(I_k)}$
for all $1\le k \le N$.

\vspace{0.5em}
2. A diagram of the multiplication of two t-scalars, either in the spacial domain or the Fourier domain, is given
in Figure \ref{fig:TscalarMultiplication}.

\vspace{0.5em}
3. An illustrative example of t-matrix multiplication
is given in Figure \ref{fig:TmatrixMultiplication}.
The t-matrices are $C_\mathit{TM} = A_\mathit{TM} \myconv B_\mathit{TM} \in C^{2\times 1} \equiv \mathbb{C}^{3\times 3\times 2\times 1}
$, where $A_\mathit{TM}\in C^{2\times 2} \equiv \mathbb{C}^{3\times 3\times 2\times 2}$
and $B_\mathit{TM}\in C^{2\times 1} \equiv \mathbb{C}^{3\times 3\times 2\times 1}$.
%is given in
%Figure \ref{fig:TmatrixMultiplication}.

\vspace{0.5em}
%\paragraph{\bf  \textit{Example of t-matrix multiplication}.}
4. An example of a generalized tensor (g-tensor) $X_\mathit{GT} \in C^{2\times 3\times 2} \equiv \mathbb{C}^{3\times 3\times 2\times 3\times 2}$ (the size of t-scalars is $3\times 3$) and its generalized mode-$k$ flattening are given in Figure \ref{fig:anExampleGT}.

In the Figure \ref{fig:anExampleGT}, the notation
$(X_\mathit{GT})_{:,:, 1} \in C^{2\times 3}$ denotes the first frame of $X_\mathit{GT}$ and
the notation
$(X_\mathit{GT})_{:,:, 2} \in C^{2\times 3} $ denotes the second frame of $X_\mathit{GT}$.
The notations
$X_\mathit{GT(\mathrm{1})} \in C^{2\times 6}$,
$X_\mathit{GT(\mathrm{2})} \in C^{3\times 4} $,
$X_\mathit{GT(\mathrm{3})} \in C^{2\times 6} $ respectively denote the generalized
mode-$1$,
mode-$2$ and
mode-$3$ flattening t-matrix forms of $X_\mathit{GT}$.

\vspace{0.5em}
5. An example result of generalized mode-$2$ multiplication $M_\mathit{GT} \doteq X_\mathit{GT}
~\myconv_{2}~ Y_\mathit{TM} \in C^{2\times 2\times 2} \equiv \mathbb{C}^{3\times 3\times 2\times 2\times 2}$,
where $X_\mathit{GT} \in C^{2\times 3\times 2}
\equiv \mathbb{C}^{3\times 3\times 2\times 3\times 2}
$
and $Y_\mathit{TM} \in C^{2\times 3} \equiv \mathbb{C}^{3\times 3\times 2\times 3} $ is shown in Figure \ref{fig:TensorMultiplication}.

\begin{figure*}[tbh]
\centering
\includegraphics[width=0.9\textwidth]{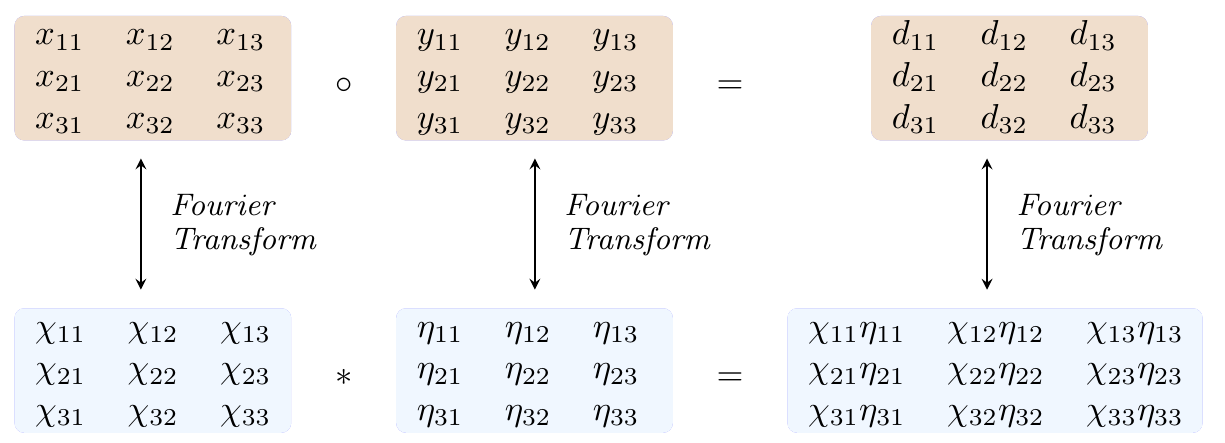}
\caption{A diagram of t-scalar multiplication where the size of t-scalars is $3\times 3$.}
\label{fig:TscalarMultiplication}
\end{figure*}

%\begin{textblock}{10.5}(-5.1, 3.9)
\begin{figure*}[tbh]
\centering
\includegraphics[width=0.9\textwidth]{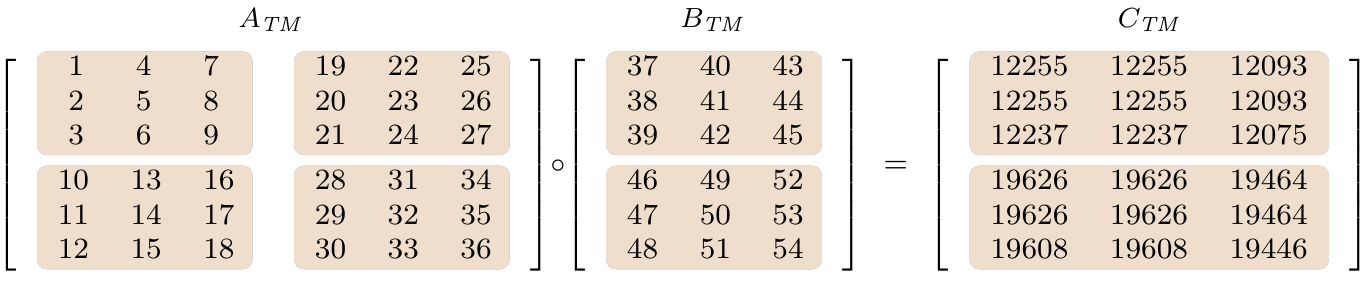}
\caption{An illustrative example of t-matrix multiplication where the size of t-scalars is $3\times 3$.}
\label{fig:TmatrixMultiplication}
\end{figure*}
%\end{textblock}

%\clearpage
%\vspace{1.5em}
\begin{figure*}[tbh]
\centering
\includegraphics[width=0.9\textwidth]{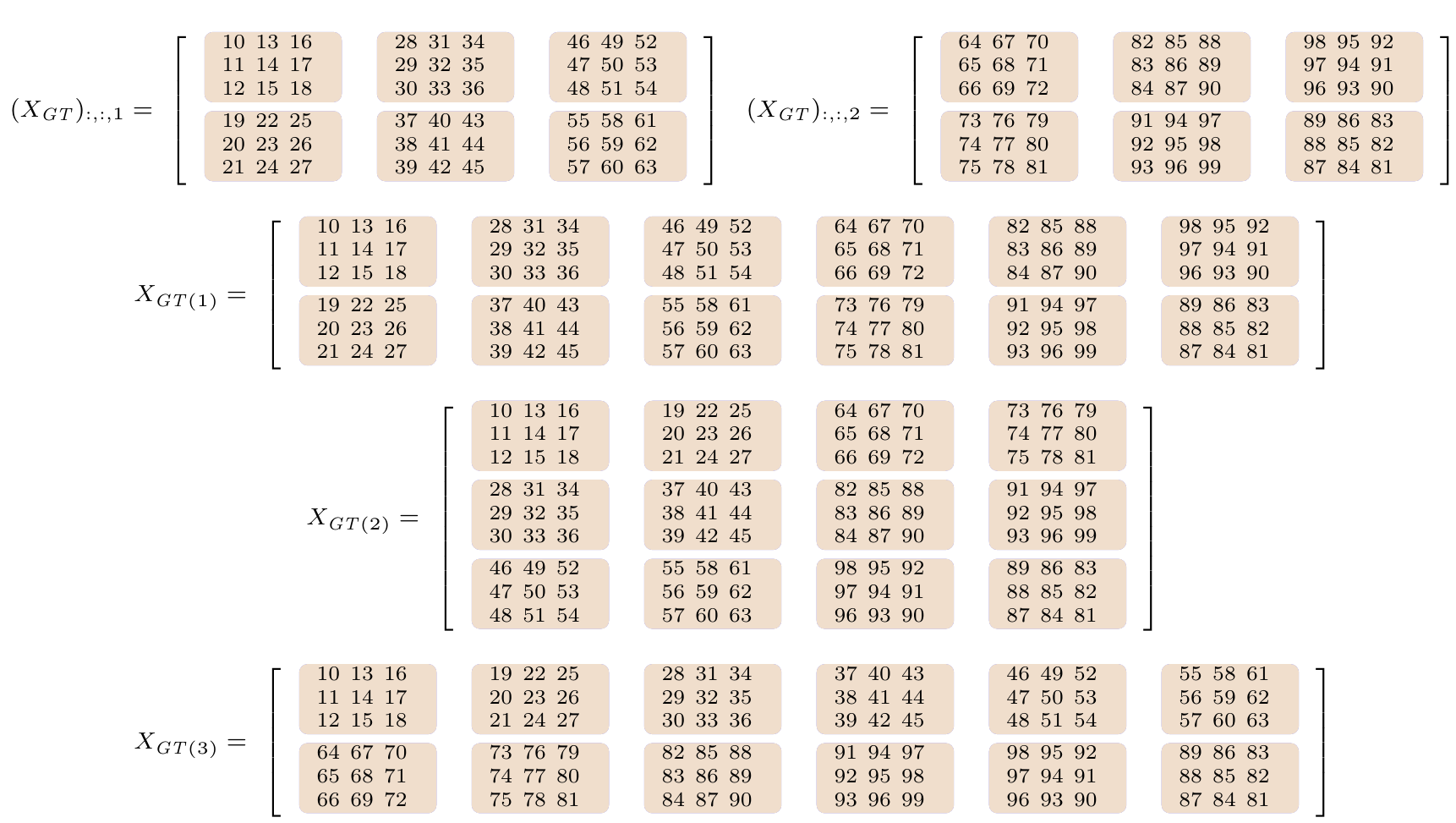}
\caption{An illustrative example of a generalized tensor in $C^{2\times 3\times 2}$ and the mode-$k$ ($k =1, 2, 3$) flattened form of the generalized tensor. }
\label{fig:anExampleGT}
\end{figure*}
%\end{textblock}

%\begin{textblock}{12}(1.5, 5)
\begin{figure*}[tbh]
\centering
\includegraphics[width=0.65\textwidth]{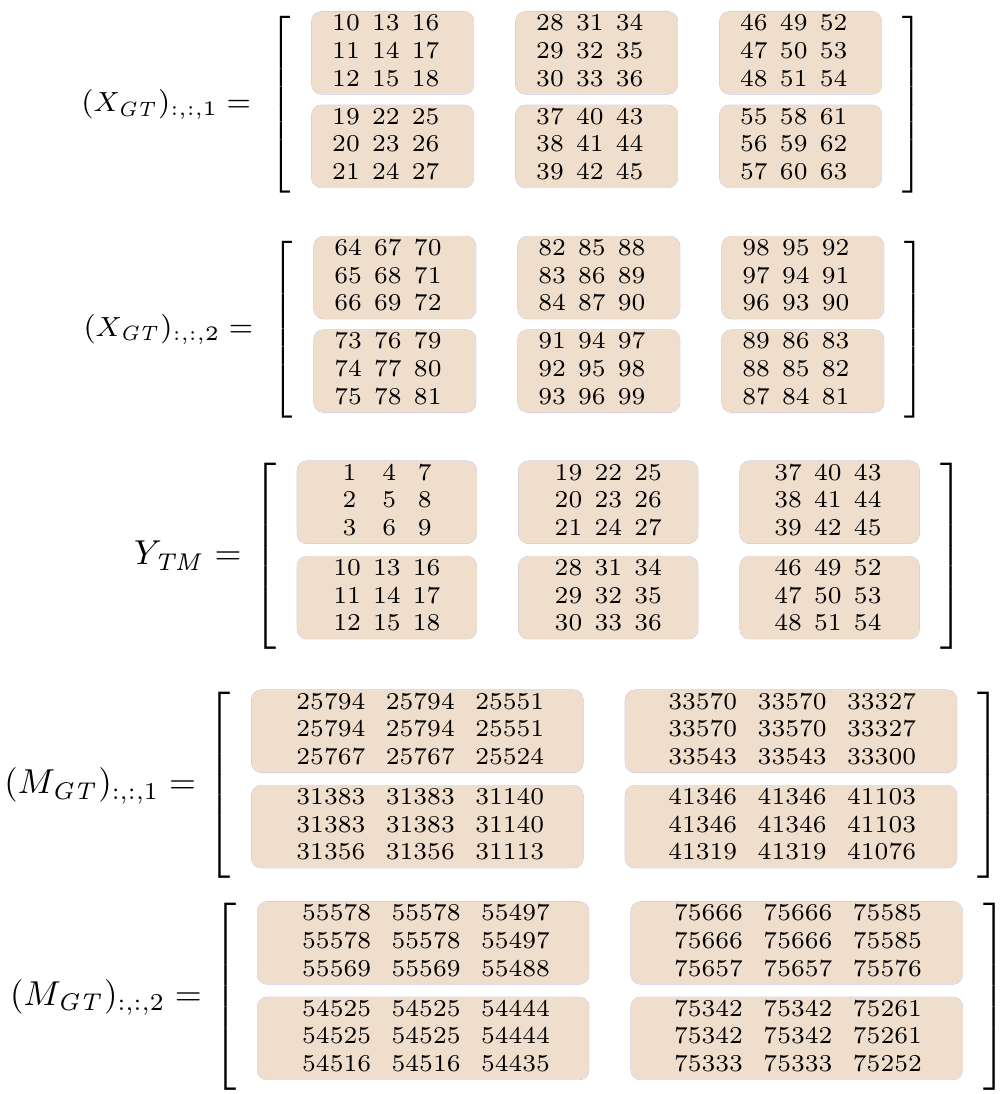}
\caption{An illustrative example of mode-$2$ generalized tensor multiplication
$M_\mathit{GT} = X_\mathit{GT} \circ_{2} Y_\mathit{TM} \in C^{2\times 2\times 2} \equiv \mathbb{C}^{3\times 3\times 2\times 2\times 2}$
where
$X_\mathit{GT} \in C^{2\times 3\times 2} \equiv \mathbb{C}^{3\times 3\times 2\times 3\times 2} $,
$Y_\mathit{TM} \in C^{2\times 3} \equiv \mathbb{C}^{3\times 3\times 2\times 3}$.}
\label{fig:TensorMultiplication}
\end{figure*}
%\end{textblock}

\end{document}